%% file: Grassmann.tex
\documentclass[11pt]{amsart}
\pdfoutput=1
\usepackage[utf8]{inputenc}
\usepackage[T1]{fontenc}
\usepackage{amsmath,amssymb,amsthm,mathtools}
\usepackage{graphicx}
\usepackage[margin=1in]{geometry}
\usepackage{hyperref}
\usepackage{enumitem}
\usepackage{booktabs}
\usepackage{array}
\usepackage{listings}
\usepackage{xcolor}
\usepackage{tikz}
\usepackage[final,protrusion=true,expansion=false]{microtype}

\hypersetup{
    colorlinks=true,
    linkcolor=blue!50!black,
    citecolor=blue!50!black,
    urlcolor=blue!50!black
}

\theoremstyle{plain}
\newtheorem{theorem}{Theorem}[section]
\newtheorem{proposition}[theorem]{Proposition}
\newtheorem{lemma}[theorem]{Lemma}
\newtheorem{corollary}[theorem]{Corollary}

\theoremstyle{definition}
\newtheorem{definition}[theorem]{Definition}

\theoremstyle{remark}
\newtheorem{remark}[theorem]{Remark}

\newcommand{\R}{\mathbb{R}}
\newcommand{\RP}{\mathbb{RP}}
\newcommand{\Sphere}{\mathbb{S}}
\newcommand{\Gr}{\mathrm{Gr}}

\newcommand{\OO}{\mathrm{O}}
\newcommand{\Sym}{\mathrm{Sym}}
\newcommand{\im}{\mathrm{im}}
\newcommand{\rank}{\mathrm{rank}}

\newcommand{\spn}{\mathrm{span}}
\newcommand{\Symp}{\Sym^{+}}
\newcommand{\Sigmapt}{\Sigma^{\mathrm{pure}}_{tt}}
\newcommand{\Sigmabt}{\Sigma^{\mathrm{blur}}_{tt}}
\newcommand{\Sigmaept}{\widetilde{\Sigma}^{\mathrm{pure}}_{tt}}
\newcommand{\Sigmaebt}{\widetilde{\Sigma}^{\mathrm{blur}}_{tt}}
\newcommand{\Sigmafour}{\Sigma_{4\mathrm{D}}}
\newcommand{\Sigmathree}{\Sigma_{3\mathrm{D}}}
\newcommand{\Sigmacond}{\Sigma_{3\mathrm{D}|t}}
\newcommand{\Sigmaecond}{\widetilde{\Sigma}_{3\mathrm{D}|t}}
\newcommand{\Sigmarender}{\Sigma^{\mathrm{render}}_{3\mathrm{D}}}
\newcommand{\Vthree}{V_{3\mathrm{D}}}
\newcommand{\cworld}{c_{\mathrm{world}}}

\newcommand{\paperfig}[2]{%
  \IfFileExists{#1}{\includegraphics[width=#2]{#1}}%
  {\fbox{\parbox[c][3.5cm][c]{#2}{\centering\ttfamily\detokenize{#1}\\[2pt](image not found; place file here)}}}}

\definecolor{codegray}{rgb}{0.5,0.5,0.5}
\definecolor{codegreen}{rgb}{0,0.5,0}
\definecolor{codepurple}{rgb}{0.58,0,0.82}
\definecolor{backcolour}{rgb}{0.97,0.97,0.97}

\lstdefinestyle{pythonstyle}{
    backgroundcolor=\color{backcolour},
    commentstyle=\color{codegreen},
    keywordstyle=\color{magenta},
    numberstyle=\tiny\color{codegray},
    stringstyle=\color{codepurple},
    basicstyle=\ttfamily\footnotesize,
    breakatwhitespace=false,
    breaklines=true,
    captionpos=b,
    keepspaces=true,
    numbers=left,
    numbersep=5pt,
    showspaces=false,
    showstringspaces=false,
    showtabs=false,
    tabsize=2,
    language=Python
}

\title[Grassmannian Splatting I]{Grassmannian Splatting I:\\
Moving rank-2 Spacetime Surfels for Dynamic Scene Rendering}

\author{Aaron~Maurice~Berman}
\author{Shantanu~Dave}

\date{July 2026}

\begin{document}

\begin{abstract}
We introduce Grassmannian splatting, a dynamic scene representation whose
primitives are Gaussians supported on 3-planes in spacetime
$\R^4$: generically, spatial 2-planes in uniform translation along their
normals. Each primitive carries a unit normal
$n \in \Sphere^3/\{\pm 1\} \cong \Gr(3,4)$ and an unconstrained factor
$L \in \R^{4 \times 3}$, with covariance
\[
  \Sigmafour = (P_n L)(P_n L)^T, \qquad P_n = I - n n^T.
\]
For generic $L$ and $n \neq \pm e_0$, conditioning on time returns a rank-2
surfel at every frame. The normal of the disk and its velocity along that
normal are read off from $n$; the disk shape and the tangential drift of its
center are set by $L$. Existing native 4D Gaussian
splatting methods \cite{yang2023gs4d,duan20244drotorgaussiansplattingefficient} slice full-rank spacetime
covariances, so their per-frame primitive is a volumetric ellipsoid; since
conditioning lowers rank by exactly one, a rank-2 surfel in the slice requires
a rank-3 spacetime covariance, and the parameterization above realizes exactly
these. The motion model is closed form, i.e. no deformation field is learned, and
no custom CUDA is required: the conditioned disk
feeds a standard 3DGS rasterizer through its precomputed-covariance interface.
A soft clamp in the Schur denominator regularizes the static orientation and
continuously bridges rank-3 static and rank-2 dynamic behavior, so static and
moving primitives form a single continuous family. On the 17 HyperNeRF scenes of MonoDyGauBench, training is fastest among
all compared methods (4.9 to 5.6 times faster than the strongest quality
baselines), while ranking second in
PSNR, MS-SSIM, and LPIPS.
Code: \url{https://github.com/PaulCelanCoding/grassmannian-splatting}

\end{abstract}

\maketitle

% =====================================================================
% Introduction
% =====================================================================

\section{Introduction}\label{sec:intro}
 
The Grassmannian $\Gr(3,4)$ is a moduli space of constant-velocity plane
motions: every 3-plane in spacetime $\R^4$, with the single exception of the
static slice $[e_0]$, is a spatial 2-plane in uniform translation along its
normal (Lemma~\ref{prop:slicing}). We take these motions as the primitives of
dynamic scene rendering: each splat is a Gaussian supported on such a plane.
Slicing it at any time yields a rank-2 surfel whose normal and normal velocity
are read off from the underlying point of $\Gr(3,4)$. Concretely, we
parameterize the plane by its unit normal $n \in \Sphere^3/\{\pm 1\}$ and an
unconstrained factor $L \in \R^{4 \times 3}$, and set
\begin{equation}\label{eq:headline}
  \Sigmafour = (P_n L)(P_n L)^T, \qquad P_n = I - n n^T.
\end{equation}
Since $\Sigmafour$ depends on $n$ only through the projector $P_n = P_{-n}$, it
is a well-defined function on $\Gr(3,4) \cong \Sphere^3/\{\pm 1\}$; the
projector is the standard embedding of the Grassmannian into the symmetric
matrices. The covariance is PSD of rank at most 3 with kernel direction $n$
(Lemma~\ref{prop:rank-kernel}), and for generic $L$ and $n \neq \pm e_0$,
conditioning on time yields a spatial covariance of rank exactly 2: a disk in
$\R^3$, at every frame, with no per-frame flatness constraint and no learned
deformation field.

 \begin{figure}[t]
\centering
\paperfig{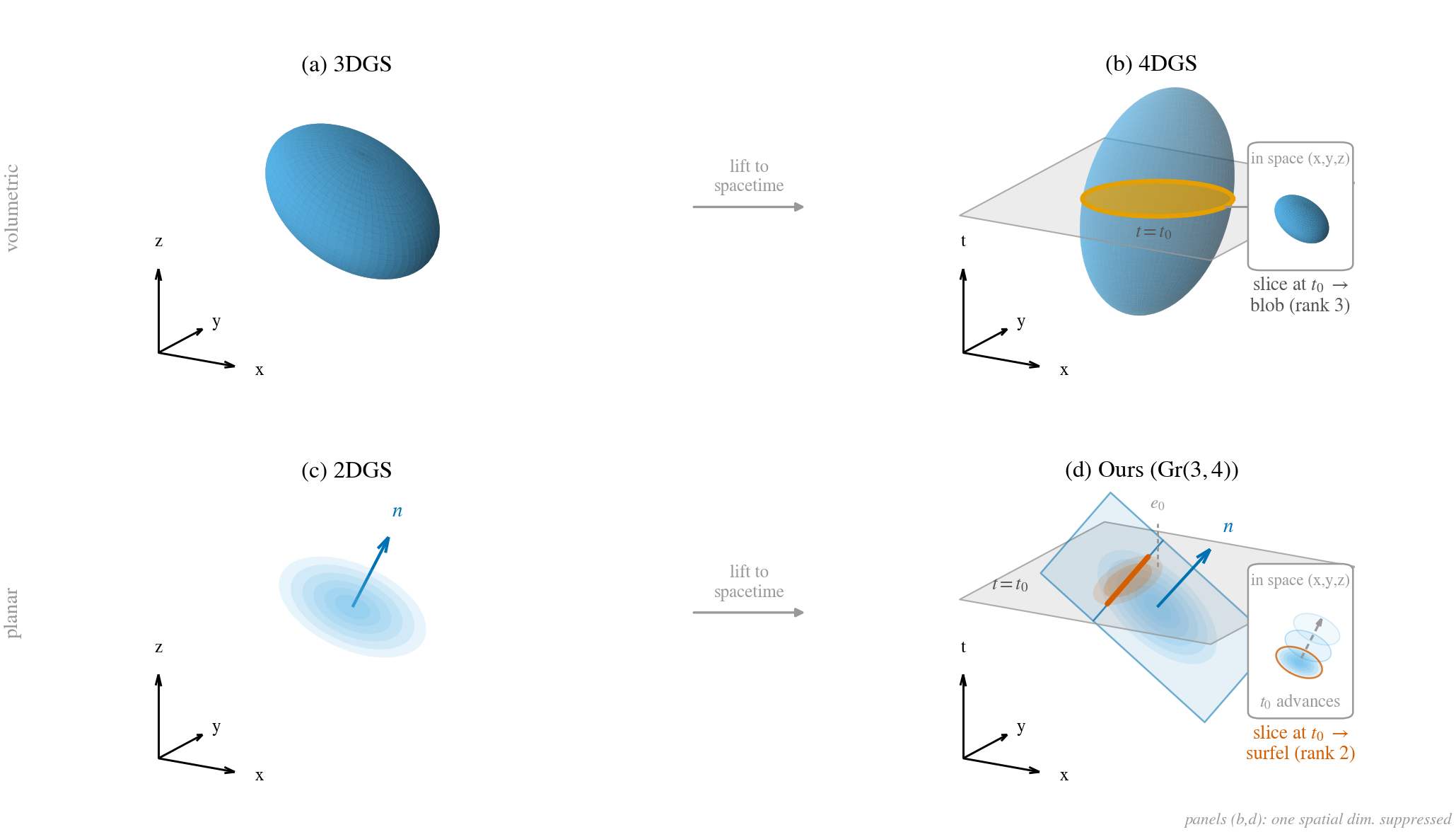}{\textwidth}
\caption{Primitive types. Top (volumetric): (a) the 3D ellipsoid of 3DGS \cite{kerbl3Dgaussians}; (b) its native 4D analogue \cite{yang2023gs4d,duan20244drotorgaussiansplattingefficient}, whose spacetime slice at $t = t_0$ is again a rank-3 blob. Bottom (planar): (c) the rank-2 disk of 2DGS \cite{Huang2DGS2024}, with normal $n$; (d) ours, a Gaussian supported on a 3-plane in $\R^4$ with normal $n \in \Sphere^3/\{\pm 1\} \cong \Gr(3,4)$, whose slice at $t = t_0$ is a rank-2 surfel that translates along its spatial normal as $t_0$ advances. Panels (b,d) suppress one spatial dimension.}
\label{fig:zoo}
\end{figure}

The primitive fills a specific cell of the design space. Static splatting
offers a volumetric primitive, the rank-3 ellipsoid of 3DGS \cite{kerbl3Dgaussians},
and a planar one, the rank-2 surfel of 2DGS \cite{Huang2DGS2024}. Dynamic
splatting adds the temporal axis in three ways. Deformation- or tracking-based methods carry a
primitive in a canonical frame and warp it into each frame
\cite{Wu_2024_CVPR,Yang_2024_CVPR}; among these, space-time 2D Gaussian
splatting (ST-2DGS) \cite{wang2024spacetime2dgaussiansplatting} warps rank-2
surfels. Per-primitive trajectory methods attach an explicit parametric motion
to each 3D Gaussian \cite{Li_2024_CVPR}. Native methods place the primitive
in spacetime $\R^4$ and recover each frame by conditioning on time
\cite{yang2023gs4d,duan20244drotorgaussiansplattingefficient}; their spacetime (or higher-dimensional
\cite{gao20257dgsunifiedspatialtemporalangulargaussian,wu2025orientationanchoredhypergaussian4dreconstruction}) covariance is full rank, so the conditioned
primitive is a rank-3 ellipsoid. Our primitive is native and rank-deficient:
to our knowledge the first spacetime Gaussian whose slices are surfels. Its
motion is induced by the slicing (the disk of a 3-plane-supported Gaussian
translates at constant velocity), so the per-primitive temporal coherence that
trajectory methods build in through an explicit curve is here a property of
the plane itself.
 
The slicing step itself is shared with the full-rank native methods
\cite{yang2023gs4d,duan20244drotorgaussiansplattingefficient}: the same conditional-Gaussian computation in every
case. What it returns is fixed by the Schur rank identity
(Lemma~\ref{lem:schur-rank}): conditioning on one coordinate lowers the rank by
exactly one, so a rank-2 surfel in the slice is equivalent to a rank-3
spacetime covariance with $\Sigmapt > 0$. Equation~\eqref{eq:headline}
parameterizes exactly these covariances (the PSD matrices with image in the
3-plane $E_n$) globally over $\Gr(3,4)$
(Proposition~\ref{prop:effective-dofs}).
 
The parameterization is global: the plane enters only through the projector
$P_n$, never through a local chart of $\Gr(3,4)$, so the rank structure of
\eqref{eq:headline} holds without case distinctions
(Section~\ref{sec:geometry}). The price is redundancy in $L$ (an $\OO(3)$
right action together with the projected-out $n$-direction), leaving
$\Sigmafour$ with 6 effective degrees of freedom against the 12 parameters of
$L$ (Proposition~\ref{prop:effective-dofs}). The redundancy is invisible to the
photometric objective at fixed $n$ and requires no projection step
(Section~\ref{subsec:effective-dofs}).
 
Conditioning on $x_0 = t_0$ yields the surfel, and it can be read in three
equivalent ways. Geometrically, the 3-plane slices to a 2-plane whose normal is
the spatial part $n_{1:}$ of the spacetime normal (Lemma~\ref{prop:slicing}).
Probabilistically, the kernel of the Schur complement is spanned by the same
$n_{1:}$, and the conditional mean lands on the geometric slice
(Lemma~\ref{lem:schur-on-slice}): the two derivations produce the same disk,
the same normal, and the same center. In latent terms, writing
$\Sigmafour = M M^T$ with $M = P_n L$, the first row $m_0$ of $M$ is the latent
direction that controls time, and conditioning is exactly the projection that
removes it; the lower rows $M_s$ map the remaining 2-dimensional latent
subspace to the disk, and the motion of the center is governed by $M_s m_0$
(Section~\ref{sec:world-space-conditioning}). This separates the two roles:
$n$ selects the plane and its normal velocity, while $L$ sets the disk shape
and the tangential drift of its center
(Remark~\ref{rmk:normal-vs-tangential}).
 
The time axis stratifies $\Gr(3,4)$ into an open dense set of moving disks, an
$\RP^2$ of planes containing the time axis (their spatial support is
time-independent), and the single point $n = \pm e_0$. At $n = \pm e_0$ the
support plane is the spatial slice itself, the temporal variance $\Sigmapt$
vanishes, and the conditioning map is singular: its limit as $n \to e_0$
depends on the direction of approach (Proposition~\ref{prop:bridge}). A soft
clamp in the Schur denominator regularizes this singularity and, below a tilt
threshold, interpolates continuously to static rank-3 behavior
(Corollary~\ref{cor:bridge}). Static and dynamic primitives thus form a single
continuous family (Section~\ref{sec:numerical-stability}).
 
Nothing in the pipeline requires a custom kernel. The conditioned rank-2 disk,
lifted by a small isotropic $\sigma^2_{\mathrm{lift}} I_3$ for invertibility,
is passed to an unmodified 3DGS rasterizer through the \texttt{cov3D\_precomp}
interface, and a temporal opacity weight derived from $\Sigmapt$ fades each
primitive in and out around its lifetime, so appearing and disappearing content
is representable in principle without creating or destroying primitives
mid-sequence (Section~\ref{sec:rendering}). All of the construction is confined to
the covariance preprocessing; the rendering kernel is untouched.

\subsection{Notation}
We work in $\R^4$ with coordinates $x = (x_0, X)$, where $x_0 \in \R$ is time and $X = (x_1, x_2, x_3) \in \R^3$ is space. The standard basis is $\{e_0, e_1, e_2, e_3\}$ with $e_0 = (1,0,0,0)^T$. Inner products are Euclidean; $I_k$ is the $k \times k$ identity; $\Symp_k$ is the $k \times k$ PSD cone. For a matrix $M$, $\im(M)$ is its column space and $\ker(M)$ its kernel. The Schur complement of $\begin{pmatrix} A & B \\ B^T & D \end{pmatrix}$ with $A$ invertible is $D - B^T A^{-1} B$.

% =====================================================================
\section{The Geometric Setting}\label{sec:geometry}
% =====================================================================

In this section, we recall the relevant objects. The ambient space is $\R^4 = \R \times \R^3$, treated as a Euclidean inner product space with a distinguished basis vector $e_0$ (the time axis).

\subsection{The Grassmannian $\Gr(3,4)$ and projectors}

\begin{definition}[Grassmannian]
Let $\Gr(3,4)$ denote the manifold of 3-dimensional linear subspaces of $\R^4$. As a homogeneous space,
\[
  \Gr(3,4) \cong \OO(4)/(\OO(3) \times \OO(1)) \cong \Sphere^3/\{\pm 1\} \cong \RP^3,
\]
a smooth manifold of dimension 3.
\end{definition}

For a unit vector $n \in \Sphere^3 \subset \R^4$, the canonical 3-plane is
\[
  E_n = \{x \in \R^4 : \langle n, x \rangle = 0\}.
\]
Since $E_n = E_{-n}$, the assignment $n \mapsto E_n$ descends to a bijection $\Sphere^3/\{\pm 1\} \to \Gr(3,4)$. The orthogonal projector onto $E_n$ is
\[
  P_n = I - n n^T \in \R^{4 \times 4},
\]
with $P_n^2 = P_n = P_n^T$, $P_n n = 0$, and $P_n x = x$ for all $x \in E_n$. 

\subsection{Slicing}
A 3-plane in spacetime is a moving 2-plane in space.

\begin{lemma}[Slicing law]\label{prop:slicing}
Write $n = (n_0, n_{1:}) \in \R \times \R^3$. If $n_{1:} \neq 0$, the spatial slice of $E_n$ at time $t_0$ is the affine plane
\[
  \{X \in \R^3 : \langle n_{1:}, X \rangle = -t_0 n_0\} \subset \R^3,
\]
with unit normal $n_{1:}/\|n_{1:}\|$ and signed distance $-t_0 n_0 / \|n_{1:}\|$ from the origin. As $t_0$ varies, the plane translates rigidly along its normal with velocity $-n_0 \, n_{1:} / \|n_{1:}\|^2 \in \R^3$ (signed speed $-n_0/\|n_{1:}\|$). In particular, the point of the slice nearest the origin is $-t_0\,\frac{n_0}{\|n_{1:}\|^2}\, n_{1:}$.\end{lemma}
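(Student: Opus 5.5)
The plan is to compute everything directly from the defining equation of $E_n$. A point $x = (x_0, X) \in \R^4$ lies in $E_n$ exactly when $\langle n, x \rangle = n_0 x_0 + \langle n_{1:}, X \rangle = 0$. Fixing $x_0 = t_0$ turns the spatial slice $\{X : (t_0, X) \in E_n\}$ into the solution set of the single affine equation $\langle n_{1:}, X \rangle = -t_0 n_0$. Because $n_{1:} \neq 0$, this equation is nondegenerate, so the slice is an affine 2-plane in $\R^3$ with normal direction $n_{1:}$; normalizing gives the unit normal $\hat n = n_{1:}/\|n_{1:}\|$.

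Next I will rewrite the slice in Hessian normal form. Dividing by $\|n_{1:}\|$ gives $\langle \hat n, X \rangle = -t_0 n_0/\|n_{1:}\|$, and this right-hand side is, by definition, the signed distance from the origin measured along $\hat n$. The nearest point to the origin is the orthogonal projection of $0$ onto the plane, which is the point $d\,\hat n$ with $d$ the signed distance. Evaluating gives $-t_0 \frac{n_0}{\|n_{1:}\|}\frac{n_{1:}}{\|n_{1:}\|} = -t_0 \frac{n_0}{\|n_{1:}\|^2} n_{1:}$. As a check, this point satisfies the plane equation, and it is parallel to $n_{1:}$, hence orthogonal to the direction space $n_{1:}^\perp$.

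For the motion statement, the key observation is that the direction space $n_{1:}^\perp$ does not depend on $t_0$. Consequently the slices at different times are parallel translates of one another, so the family moves rigidly. The translation is carried by the nearest point $c(t_0) = -t_0 \frac{n_0}{\|n_{1:}\|^2} n_{1:}$, which is linear in $t_0$. Its derivative $-n_0 n_{1:}/\|n_{1:}\|^2$ is therefore constant and parallel to the normal. Pairing this derivative with $\hat n$ yields the signed speed $-n_0/\|n_{1:}\|$.

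There is no genuine obstacle in this argument. The one point that needs care is justifying the phrase ``translates rigidly along its normal''. I will handle it by noting that $\mathrm{slice}(t_0) = c(t_0) + n_{1:}^\perp$. The velocity of the family is then defined as the velocity of the normal-direction representative $c(t_0)$, since any tangential component is unobservable for an unbounded plane.
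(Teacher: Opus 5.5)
Your proposal is correct and takes the same route as the paper: restrict the defining equation $\langle n, x\rangle = 0$ to $x_0 = t_0$, and observe that the normal direction $n_{1:}/\|n_{1:}\|$ does not depend on $t_0$. You also derive the Hessian normal form, the nearest point $c(t_0)$, and the velocity as $\tfrac{d}{dt_0}c(t_0)$ explicitly, which the paper's short proof leaves implicit.
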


\begin{proof}
The defining equation $\langle n, x \rangle = 0$ at $x_0 = t_0$ reads $n_0 t_0 + \langle n_{1:}, X \rangle = 0$, giving the stated affine equation. The normal direction $n_{1:}/\|n_{1:}\|$ is independent of $t_0$, so the plane translates rigidly.
\end{proof}

\begin{remark}[Affine version]\label{rmk:affine}
The Gaussian primitive will be supported on a translate $\mu + E_n$ for a per-Gaussian anchor $\mu = (v_0, V)$. The slicing law transfers verbatim with $t_0 \mapsto t_0 - v_0$, $X \mapsto X - V$: the spatial slice of $\mu + E_n$ at time $t_0$ is
\[
  \{X \in \R^3 : \langle n_{1:}, X - V \rangle = -n_0 (t_0 - v_0)\},
\]
centered at $V$ when $t_0 = v_0$. Section~\ref{sec:world-space-conditioning} derives a probabilistic mean for the slice via conditioning, and Lemma~\ref{lem:schur-on-slice} verifies that the two descriptions agree.
\end{remark}

\begin{remark}[Degenerate cases]\label{rmk:degenerate}
If $n_0 = 0$, the plane $E_n$ contains the time axis: every spatial slice is the same 2-plane in $\R^3$, so the spatial support is time-independent. We will see in  Section \ref{sec:world-space-conditioning} that the Gaussian itself need not be frozen, since the conditional mean can drift tangentially within the fixed plane; this is encoded by the covariance cross-block $\cworld$ from \eqref{eq:block} rather than by $n$. Conversely, if $n_{1:} = 0$ (i.e.\ $n = \pm e_0$), the plane $E_n$ is exactly the spatial slice $\{x_0 = 0\}$. The pure temporal variance $\Sigmapt$ vanishes, so the exact conditioning formulas are singular. This is the static orientation, handled by the numerical bridge of Section~\ref{sec:numerical-stability}.
\end{remark}

% =====================================================================
\section{The Gaussian Primitive}\label{sec:primitive}
% =====================================================================

\begin{figure}[t]
\centering
\input{fig_anatomy.tex}
\caption{Anatomy of a single primitive. One spatial dimension is suppressed, so spacetime is drawn as $(t, x, y)$; the translucent planes are the frame slices $\{t = t_i\}$. The Gaussian lives on the affine plane $\mu + E_n$, and its frame slice is a segment on the intersection line $(\mu + E_n) \cap \{t = t_i\}$, in the full model a rank-2 disk (Lemma~\ref{lem:block-consequences}), with fill encoding the temporal weight $w_t$. The spacetime normal $n$ points out of the support plane; its spatial part $n_{1:}$ is the in-slice normal of the sliced disk. Carrying the $t_1$-disk to $t_2$ without motion (dashed ghost) and comparing with the true $t_2$-disk decomposes the displacement over $\Delta t = t_2 - t_1$ inside the $t_2$-slice: normal motion along $n_{1:}$, determined by $n$ alone (green), plus tangential drift $u_\parallel$ along the slice, encoded in $L$ through the cross-block $\cworld$ (orange): both arrows are velocities times $\Delta t$; Remark~\ref{rmk:normal-vs-tangential}. The dashed grey line is the center worldline $(t, \Vthree(t))$, which lies in the support plane (Lemma~\ref{lem:schur-on-slice}). Bottom: the temporal envelope $\alpha^{\mathrm{eff}}(t) = \alpha\, w_t$ expresses birth and death of scene content; the double arrow marks the $1\sigma$ half-width $\sqrt{\Sigmabt}$, where $w_t = e^{-1/2} \approx 0.61$ of its peak (Section~\ref{subsec:effective-opacity}). Schematic with illustrative parameter values, not trained data.}
\label{fig:anatomy}
\end{figure}

We do not assign a covariance to a primitive directly; the covariance is computed from the parameters below in every forward pass.

\subsection{Parameters}
Each Gaussian carries:
\begin{itemize}
  \item $n \in \Sphere^3/\{\pm 1\}$, the plane normal, 3 DOF, stored as unconstrained $n_{\mathrm{raw}} \in \R^4$ and renormalized in the forward pass;
  \item $L \in \R^{4 \times 3}$, an unconstrained factor;
  \item $\mu = (v_0, V) \in \R \times \R^3 = \R^4$, the mean in spacetime;
  \item $\alpha \in [0, 1]$, the base opacity, stored as a logit;
  \item a color, either a constant RGB triple or spherical harmonic coefficients;
  \item a per-Gaussian rank-lift variance $\sigma^2_{\mathrm{lift},k}$, stored as a logit; the blur scalars $\sigma^2_{\mathrm{pix}}, \sigma^2_{\mathrm{tmp}}$ are global and shared by all primitives (Section~\ref{subsec:blur-trio}).
\end{itemize}

\subsection{The 4D covariance}

\begin{definition}[Projected factor and 4D covariance]
The projected factor is
\begin{equation}\label{eq:Ln}
  L_n = P_n L = L - n(n^T L) \in \R^{4 \times 3},
\end{equation}
whose three columns lie in $E_n$. The 4D covariance is
\begin{equation}\label{eq:Sigma4D}
  \Sigmafour = L_n L_n^T = (P_n L)(P_n L)^T \in \Symp_4.
\end{equation}
\end{definition}

\begin{lemma}[Rank and kernel of $\Sigmafour$]\label{prop:rank-kernel}
$\Sigmafour$ is PSD with $\rank(\Sigmafour) \leq 3$ and $\spn(n) \subseteq \ker(\Sigmafour)$. Consequently $\im(\Sigmafour) \subseteq E_n$.
\end{lemma}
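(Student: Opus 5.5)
The argument is direct linear algebra from the factored form $\Sigmafour = L_n L_n^T$ with $L_n = P_n L$. I would verify the four claims in order: PSD, the kernel containment, the rank bound, and the image containment.

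\emph{Positive semidefiniteness.} For any $x \in \R^4$,
\[
  x^T \Sigmafour x = x^T L_n L_n^T x = \|L_n^T x\|^2 \geq 0,
\]
and $\Sigmafour$ is symmetric by construction.

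\emph{Kernel.} I would use the projector identities recalled in Section~\ref{sec:geometry}: $P_n^T = P_n$ and $P_n n = 0$. These give
\[
  L_n^T n = L^T P_n^T n = L^T P_n n = 0,
\]
so $\Sigmafour n = L_n (L_n^T n) = 0$. Hence $\spn(n) \subseteq \ker(\Sigmafour)$. This only needs $\|n\| = 1$, which the forward-pass renormalization guarantees.

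\emph{Rank bound.} Two independent arguments are available, and I would record both:
\begin{itemize}
  \item $\rank(\Sigmafour) = \rank(L_n) \leq 3$, because $L_n$ has only three columns. The equality holds since $\ker(L_n L_n^T) = \ker(L_n^T)$, which follows from the quadratic-form identity above.
  \item Alternatively, rank--nullity applied with $\dim\ker(\Sigmafour) \geq 1$ gives $\rank(\Sigmafour) \leq 3$.
\end{itemize}

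\emph{Image.} Since $\Sigmafour$ is symmetric, $\im(\Sigmafour) = \ker(\Sigmafour)^\perp \subseteq \spn(n)^\perp = E_n$. A more direct route also works: $\im(\Sigmafour) \subseteq \im(L_n) = \im(P_n L) \subseteq \im(P_n) = E_n$.

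There is no real obstacle here. The only step needing care is the identity $\ker(L_n L_n^T) = \ker(L_n^T)$, which comes straight from the quadratic-form computation. This identity is what makes the rank equality $\rank(\Sigmafour) = \rank(L_n)$ valid, and that equality will be reused later when genericity of $L$ is invoked to get rank exactly 3.
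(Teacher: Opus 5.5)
Your proof is correct and follows the paper's argument. Both get positive semidefiniteness from the factorization $L_n L_n^T$, the kernel from $L_n^T n = L^T P_n n = 0$, and the image as $\ker(\Sigmafour)^\perp \subseteq \spn(n)^\perp = E_n$. You also make the bound $\rank(\Sigmafour) \leq 3$ explicit, via $\rank(L_n) \leq 3$ or rank--nullity, which the paper's proof leaves implicit.
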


\begin{proof}
Since $\Sigmafour = L_n L_n^T$, it is positive semidefinite. For the kernel: $L_n^T n = (P_n L)^T n = L^T P_n n = 0$, so $\Sigmafour n = 0$ by symmetry. The image is $\ker(\Sigmafour)^\perp \subseteq \spn(n)^\perp = E_n$.
\end{proof}

This is the central architectural difference from \cite{yang2023gs4d,duan20244drotorgaussiansplattingefficient}: their $\Sigmafour$ is generically rank 4, while ours is at most rank 3 by construction.

\subsection{Effective degrees of freedom of $L$}\label{subsec:effective-dofs}
$L$ has 12 entries, but $\Sigmafour$ depends on $L$ only through its image in $E_n$ and only modulo a right action of $\OO(3)$.

\begin{proposition}[Effective DOFs of $\Sigmafour$]\label{prop:effective-dofs}
For fixed $n$, the set of covariances realizable as $\Sigmafour = (P_n L)(P_n L)^T$, $L \in \R^{4 \times 3}$, is the set of PSD matrices with image contained in $E_n$, a cone of dimension 6. In particular $\Sigmafour$ has at most 6 effective degrees of freedom, against the 12 entries of $L$.
\end{proposition}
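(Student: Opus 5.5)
We prove the set equality by two inclusions and then compute the dimension in an adapted basis. Write $\mathcal{C}_n = \{\Sigma \in \Symp_4 : \im(\Sigma) \subseteq E_n\}$.

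\emph{Realizable covariances lie in $\mathcal{C}_n$.} For any $L$, Lemma~\ref{prop:rank-kernel} already gives $\Sigmafour \in \Symp_4$ with $\im(\Sigmafour) \subseteq E_n$.

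\emph{Every element of $\mathcal{C}_n$ is realizable.} Let $\Sigma \in \mathcal{C}_n$. Since $\Sigma$ is symmetric, $\ker(\Sigma) = \im(\Sigma)^\perp \supseteq E_n^\perp = \spn(n)$, so $\rank(\Sigma) \le 3$. The spectral theorem gives
\[
  \Sigma = \sum_{i=1}^{3} \lambda_i u_i u_i^T, \qquad \lambda_i \ge 0,
\]
with orthonormal $u_i$. The nonzero $\lambda_i$ have $u_i \in \im(\Sigma) \subseteq E_n$. When $\rank(\Sigma) < 3$ we pad with orthonormal vectors of $E_n$ carrying $\lambda_i = 0$, so we may take all three $u_i \in E_n$. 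Set $L = [\sqrt{\lambda_1}u_1, \sqrt{\lambda_2}u_2, \sqrt{\lambda_3}u_3] \in \R^{4\times 3}$. Its columns lie in $E_n$, so $P_n L = L$ and $(P_nL)(P_nL)^T = LL^T = \Sigma$. The only step needing care is that all three columns lie in $E_n$, and the padding handles the rank-deficient case.

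\emph{Dimension.} Choose an orthonormal basis $Q = [q_1,q_2,q_3] \in \R^{4\times 3}$ of $E_n$. The map $S \mapsto QSQ^T$ sends $\Symp_3$ into $\mathcal{C}_n$. It is a bijection, with inverse $\Sigma \mapsto Q^T\Sigma Q$; this works because $QQ^T = P_n$ and $P_n\Sigma P_n = \Sigma$ for $\Sigma \in \mathcal{C}_n$. The map is linear, so $\mathcal{C}_n$ is a linear image of the closed convex cone $\Symp_3$. That cone has nonempty interior in $\Sym_3$, which has dimension $3\cdot 4/2 = 6$, so $\mathcal{C}_n$ is a cone of dimension 6.

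\emph{Degrees of freedom.} The map $L \mapsto (P_nL)(P_nL)^T$ therefore has image of dimension 6. Hence $\Sigmafour$ has at most 6 effective degrees of freedom, compared with the 12 entries of $L$. The 6 lost parameters can be accounted for by the invariance $L \mapsto LR$ for $R \in \OO(3)$, which is 3-dimensional, and by the component $n^T L \in \R^{1\times 3}$ that $P_n$ discards, also 3-dimensional. This bookkeeping is not needed for the upper bound.
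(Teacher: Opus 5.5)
Your proof is correct and follows the same route as the paper's: Lemma~\ref{prop:rank-kernel} for one inclusion, a factorization $\Sigma = L_nL_n^T$ with columns in $E_n$ (so $L = L_n$ works) for the converse, and the identification of the cone with $\Symp_3$ for the dimension count. The paper only asserts the last two steps; you prove them, via the padded spectral factorization and the isomorphism $S \mapsto QSQ^T$ with $QQ^T = P_n$.
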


\begin{proof}
The set of PSD rank-$\leq 3$ matrices with image in the 3-dimensional subspace $E_n$ is a copy of $\Symp_3$, of dimension 6. Conversely, any such matrix factorizes as $L_n L_n^T$ with the columns of $L_n$ in $E_n$, and $L = L_n$ satisfies $P_n L = L_n$. So the map $L \mapsto \Sigmafour$ surjects onto a 6-dimensional set.
\end{proof}

\subsubsection{Effect of Redundancies on optimization} Let $\mathcal L$ denote the loss function described in \ref{subsec:loss}. We first investigate $\partial \mathcal  L/\ \partial  L$.
The 6 redundant directions decompose as follows. Write $L = L_n + n a^T$ with $a = L^T n \in \R^3$. The component $na^T$ (3 entries) is annihilated by $P_n$, and it does not contribute to $\Sigmafour$, and hence the photometric loss is independent of $a$ when only $L$ varies; its gradient $\frac{\partial\mathcal L}{\partial L}$ is lifted from $\frac{\partial\mathcal L}{\partial L_n}$. 

The remaining redundancy is the right action $L \mapsto L Q$, $Q \in \OO(3)$, which leaves $L_n L_n^T$ invariant. The loss $\mathcal L$ remains constant along the orbit of this action, so the optimizer settles into an arbitrary orbit representative. No constraint or projection step is used. 

\subsection{The blur trio: $\sigma^2_{\mathrm{pix}}, \sigma^2_{\mathrm{tmp}}, \sigma^2_{\mathrm{lift}}$}\label{subsec:blur-trio}
Standard EWA splatting \cite{zwicker2001ewa} carries a single isotropic blur parameter. Three distinct quantities arise in our pipeline; all are variances and all are non-negative, but they enter at different stages, and conflating them leads to incorrect formulas.

\paragraph{(1) Pixel-domain blur $\sigma^2_{\mathrm{pix}}$.} The standard EWA blur, added as $\sigma^2_{\mathrm{pix}} I_2$ to the projected 2D covariance after perspective transformation. Handled entirely by the rasterizer. Default $\sigma^2_{\mathrm{pix}} = 1$ (pixel units).

\paragraph{(2) Temporal modeling floor $\sigma^2_{\mathrm{tmp}}$.} Added to $\Sigmapt$ in the temporal weight $w_t$ (Section~\ref{subsec:effective-opacity}). It enforces a minimum visible duration for every Gaussian. Default $\sigma^2_{\mathrm{tmp}} = 0$, so that the data dictates the temporal width. Numerical stability when $\Sigmapt \to 0$ and $\sigma^2_{\mathrm{tmp}} = 0$ simultaneously is handled by a separate numerical floor $\varepsilon$ (Section~\ref{sec:numerical-stability}). Units: frames squared.

\paragraph{(3) 3D rank lift $\sigma^2_{\mathrm{lift}}$.} In the exact model, the conditioned spatial covariance has rank exactly 2 and is singular as a $3 \times 3$ matrix. CUDA EWA kernels require an invertible 3D covariance, so we add $\sigma^2_{\mathrm{lift},k} I_3$ before rasterization. The lift is a \emph{per-Gaussian learnable} parameter (a logit mapped through a softplus, Section~\ref{sec:gradients}), initialized at $\sigma^2_{\mathrm{lift}} = 10^{-4}$ in scene units (roughly 1 cm in a typical scene, below pixel resolution at standard rendering distances), from which the optimizer can thicken or thin individual primitives. This parameter is specific to rank-aware splatting: in the rank-4 methods \cite{yang2023gs4d,duan20244drotorgaussiansplattingefficient}, the conditioned covariance is already full rank and no lift is needed.

% =====================================================================
\section{World-Space Time Conditioning}\label{sec:world-space-conditioning}
% =====================================================================

We now condition $\mathcal{N}(\mu, \Sigmafour)$ on $x_0 = t_0$ to obtain a 3D Gaussian on $\R^3$ in world coordinates, and prove that the resulting covariance has rank 2. Decompose $\Sigmafour$ along the time direction $e_0$:
\begin{equation}\label{eq:block}
  \Sigmafour = \begin{pmatrix} \Sigmapt & \cworld^T \\ \cworld & \Sigmathree \end{pmatrix},
  \qquad \mu = \begin{pmatrix} v_0 \\ V \end{pmatrix},
\end{equation}
where $\Sigmapt \in \R$, $\cworld \in \R^3$, $\Sigmathree \in \R^{3 \times 3}$. The superscript ``pure'' distinguishes this from the blurred quantity $\Sigmabt = \Sigmapt + \sigma^2_{\mathrm{tmp}}$ used in the temporal weight.

\begin{remark}[When does $\Sigmapt$ vanish?]\label{rmk:tt-vanish}
Since $\Sigmapt = e_0^T \Sigmafour e_0$ and $\Sigmafour$ is PSD, $\Sigmapt = 0$ iff $\Sigmafour e_0 = 0$. Under the standing assumption $\rank(\Sigmafour) = 3$, this means $e_0 \in \spn(n)$, i.e.\ $n = \pm e_0$. So $\Sigmapt > 0$ exactly when the plane is not the static slice; the singularity of the conditioning formulas occurs only at the static orientation.
\end{remark}

The kernel constraint $\Sigmafour n = 0$ defines an algebraic cone; we now extract its block-level consequences.

\begin{lemma}[Block consequences of the plane constraint]\label{lem:block-consequences}
With $a = \Sigmapt$, $c = \cworld$, $S = \Sigmathree$, $n = (n_0, n_{1:})$, the relation $\Sigmafour n = 0$ is equivalent to
\begin{equation}\label{eq:kernel-pair}
  a n_0 + \langle n_{1:}, c \rangle = 0, \qquad c n_0 + S n_{1:} = 0.
\end{equation}
If $a > 0$, the Schur complement $\Sigmacond = S - c c^T / a$ satisfies
\begin{equation}\label{eq:schur-kernel}
  \Sigmacond n_{1:} = 0.
\end{equation}
If $\rank(\Sigmafour) = 3$ and $n_{1:} \neq 0$, then $\rank(\Sigmacond) = 2$ and $\ker(\Sigmacond) = \spn(n_{1:})$.
\end{lemma}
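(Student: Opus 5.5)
The plan is to prove the three claims in order; each rests on block-matrix algebra on the decomposition \eqref{eq:block}. For \eqref{eq:kernel-pair}, I will write $\Sigmafour n$ blockwise with $n=(n_0,n_{1:})$. The time component of $\Sigmafour n$ is $a n_0 + c^T n_{1:}$, and the spatial block is $c n_0 + S n_{1:}$. A vector vanishes iff all its components do, so $\Sigmafour n = 0$ is equivalent to the pair of equations, and nothing more is needed.

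For \eqref{eq:schur-kernel}, assume $a>0$ and compute $\Sigmacond n_{1:} = S n_{1:} - c\,(c^T n_{1:})/a$. By the second equation of \eqref{eq:kernel-pair}, $S n_{1:} = -n_0 c$. By the first, $c^T n_{1:} = -a n_0$, so the subtracted term is $c(-a n_0)/a = -n_0 c$. The two terms cancel and give zero.

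For the rank statement, I will invoke the Schur rank identity (the additivity $\rank(\Sigmafour) = \rank(a) + \rank(\Sigmacond)$ for $a>0$; this is Lemma~\ref{lem:schur-rank} as referenced in the introduction). If I may not use it, I will prove it directly. Take $T = \begin{pmatrix}1 & 0\\ -c/a & I_3\end{pmatrix}$. Then $T\Sigmafour T^T = \mathrm{diag}(a,\Sigmacond)$, and since $T$ is invertible, $\rank(\Sigmafour)=1+\rank(\Sigmacond)$.

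Before applying this I must check that $a>0$. Under $\rank(\Sigmafour)=3$ and $n_{1:}\ne0$ we have $n\ne\pm e_0$, so Remark~\ref{rmk:tt-vanish} gives $a=\Sigmapt>0$. (Directly: $a=0$ together with PSD forces $\Sigmafour e_0=0$, and since the kernel is one-dimensional and spanned by $n$, $e_0\in\spn(n)$, a contradiction.) The identity then gives $\rank(\Sigmacond)=2$. Hence $\ker(\Sigmacond)$ is one-dimensional, and it contains the nonzero vector $n_{1:}$ by \eqref{eq:schur-kernel}, so $\ker(\Sigmacond)=\spn(n_{1:})$.

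The main obstacle is mild. It is making sure the rank hypothesis yields $a>0$ rather than assuming it, which is the kernel-dimension argument above.
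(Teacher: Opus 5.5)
Your proposal is correct and follows the paper's proof essentially step for step. You read $\Sigmafour n = 0$ blockwise, use the cancellation $S n_{1:} - c\,\langle n_{1:}, c\rangle/a = -n_0 c + n_0 c = 0$, and get the rank drop from the congruence $T\Sigmafour T^T = \mathrm{diag}(a,\Sigmacond)$, which is exactly the paper's Lemma~\ref{lem:schur-rank}; that gives a one-dimensional kernel generated by $n_{1:}$. Your explicit derivation of $\Sigmapt > 0$ from $\rank(\Sigmafour)=3$ and $n_{1:}\neq 0$ is a useful addition, since the paper's proof invokes Lemma~\ref{lem:schur-rank} without re-verifying its hypothesis $a>0$ in that case.
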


\begin{proof}
Writing $\Sigmafour n$ in block form gives \eqref{eq:kernel-pair}. For \eqref{eq:schur-kernel}, $S n_{1:} = -c n_0$ and $\langle n_{1:}, c\rangle = -a n_0$ together give
$\Sigmacond n_{1:} = -c n_0 - c(-a n_0)/a = 0$.
Under $\rank(\Sigmafour) = 3$, Lemma~\ref{lem:schur-rank} gives $\rank(\Sigmacond) = 2$, so $\dim\ker(\Sigmacond) = 1$, and $n_{1:} \neq 0$ generates it.
\end{proof}

\subsection{Conditioning on time}
Provided $\Sigmapt > 0$, the conditional Gaussian on $\{x_0 = t_0\}$ is $\mathcal{N}(\Vthree(t_0), \Sigmacond)$ with
\begin{align}
  \Vthree(t_0) &= V + \cworld \cdot (t_0 - v_0) / \Sigmapt, \label{eq:Vthree}\\
  \Sigmacond  &= \Sigmathree - \cworld \cworld^T / \Sigmapt. \label{eq:Sigmacond}
\end{align}
This is the standard conditional formula for jointly Gaussian variables, the same one used in \cite{yang2023gs4d,duan20244drotorgaussiansplattingefficient}; the difference here is that it is applied to a deliberately rank-deficient covariance. Note that $\Sigmacond$ does not depend on $t_0$; only the mean does.

\begin{remark}[Normal motion vs.\ tangential drift]\label{rmk:normal-vs-tangential}
The plane normal $n$ determines the motion of the support slice only in the normal direction. The conditional mean has velocity $\dot{V}_{3\mathrm{D}} = \cworld/\Sigmapt$, and the kernel identity $\langle n_{1:}, \cworld\rangle = -n_0\Sigmapt$ gives $\langle n_{1:}, \dot V_{3\mathrm{D}}\rangle = -n_0$. So when $n_{1:} \neq 0$,
\[
  \dot V_{3\mathrm{D}} = -\frac{n_0}{\|n_{1:}\|^2} n_{1:} + u_\parallel, \qquad u_\parallel \in n_{1:}^\perp.
\]
The first term is the normal velocity of the sliced plane (Lemma~\ref{prop:slicing}); the tangential component $u_\parallel$ is encoded also by the cross-block $\cworld$, hence by $L$, not just by $n$. A full description of the motion requires both $n$ and $L$.
\end{remark}

\subsection{Rank 2 by construction}\label{subsec:rank-summary}
Throughout we restrict to the generic case $\rank(\Sigmafour) = 3$. By Lemma~\ref{lem:block-consequences}, when $\Sigmapt > 0$ (equivalently $n \neq \pm e_0$, Remark~\ref{rmk:tt-vanish}) the conditioned covariance $\Sigmacond$ has rank exactly 2, with kernel $\spn(n_{1:})$.

\subsubsection{Geometric reading}
The degenerate Gaussian $\mathcal{N}(\mu, \Sigmafour)$ is supported on the affine 3-plane $\mu + E_n$, and slicing by $\{x_0 = t_0\}$ restricts the support to the 2-dimensional intersection of Lemma~\ref{prop:slicing}. A nondegenerate Gaussian on a 3-plane, conditioned on a nontrivial linear coordinate, remains nondegenerate on the resulting 2-plane. The framework therefore arrives at a 2DGS-style surfel via slicing, with no explicit flatness constraint.

\begin{lemma}[The conditional mean lies on the slice]\label{lem:schur-on-slice}
The conditional mean $\Vthree(t_0)$ lies on the spatial slice of $\mu + E_n$ at time $t_0$, and $\Sigmacond$ has the slice's normal $n_{1:}$ as its kernel direction.
\end{lemma}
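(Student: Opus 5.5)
The plan is to check the affine slice equation of Remark~\ref{rmk:affine} directly at the point $X = \Vthree(t_0)$, using only the kernel identities of Lemma~\ref{lem:block-consequences}. Throughout we assume $\Sigmapt > 0$, so that \eqref{eq:Vthree} and \eqref{eq:Sigmacond} are defined. By Remark~\ref{rmk:tt-vanish}, under the standing rank-3 assumption this is the same as $n \neq \pm e_0$, i.e.\ $n_{1:} \neq 0$.

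\textbf{Mean.} Substituting \eqref{eq:Vthree} gives
\[
  \langle n_{1:}, \Vthree(t_0) - V\rangle = \frac{(t_0 - v_0)\,\langle n_{1:}, \cworld\rangle}{\Sigmapt}.
\]
The first relation in \eqref{eq:kernel-pair} states that $\langle n_{1:}, \cworld\rangle = -n_0 \Sigmapt$. Hence the right-hand side equals $-n_0(t_0 - v_0)$. This is exactly the defining equation of the slice of $\mu + E_n$ at time $t_0$ from Remark~\ref{rmk:affine}, so $\Vthree(t_0)$ lies on that slice.

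An equivalent way to see the same fact, which I would mention as a cross-check, is in spacetime. The displacement $\big(t_0 - v_0,\ \Vthree(t_0) - V\big)$ equals $\Sigmafour e_0 \,(t_0 - v_0)/\Sigmapt$. It therefore lies in $\im(\Sigmafour) \subseteq E_n$ by Lemma~\ref{prop:rank-kernel}. So the point $(t_0, \Vthree(t_0))$ lies in $\mu + E_n$ and has time coordinate $t_0$.

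\textbf{Kernel.} The statement about $\Sigmacond$ is \eqref{eq:schur-kernel}, which shows that $n_{1:}$ is annihilated by $\Sigmacond$. Under $\rank(\Sigmafour) = 3$, Lemma~\ref{lem:block-consequences} further gives $\ker(\Sigmacond) = \spn(n_{1:})$. By Lemma~\ref{prop:slicing}, $n_{1:}$ is the slice normal, so this direction is the slice's normal.

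No step is a real obstacle. The only care needed is the hypotheses: $\Sigmapt > 0$ for the formulas to make sense, and rank 3 for the kernel to be exactly the span of $n_{1:}$ rather than merely to contain it. The static case $n = \pm e_0$ is excluded here and deferred to Section~\ref{sec:numerical-stability}.
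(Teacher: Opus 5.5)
Your proof is correct and is essentially the paper's argument: substitute \eqref{eq:Vthree} into the slice equation of Remark~\ref{rmk:affine}, use $\langle n_{1:}, \cworld\rangle = -n_0\Sigmapt$ from \eqref{eq:kernel-pair}, and read the kernel statement off \eqref{eq:schur-kernel}. Your spacetime cross-check, that $\Sigmafour e_0 \in \im(\Sigmafour) \subseteq E_n$, is also valid, and you are right that the rank-$3$ hypothesis is what upgrades $n_{1:} \in \ker(\Sigmacond)$ to $\ker(\Sigmacond) = \spn(n_{1:})$.
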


\begin{proof}
Using \eqref{eq:Vthree} and the kernel identity $\langle n_{1:}, \cworld \rangle = -n_0 \Sigmapt$ from \eqref{eq:kernel-pair},
\[
  \langle n_{1:}, \Vthree(t_0) - V \rangle = \frac{t_0 - v_0}{\Sigmapt}\, \langle n_{1:}, \cworld\rangle = -n_0(t_0 - v_0),
\]
which is the defining equation of the slice in Remark~\ref{rmk:affine}. The kernel statement is \eqref{eq:schur-kernel}.
\end{proof}

\subsubsection{Latent-space reading}
Let $M \in \R^{4 \times 3}$ have rank 3 (in the pipeline, $M = L_n$), let $m_0^T$ be its first row and $M_s$ its lower three rows. Writing $\Sigmafour = M M^T$, we have $\Sigmapt = \|m_0\|^2$ and, for $m_0 \neq 0$,
\[
  \Sigmacond = M_s\, \Pi_{m_0^\perp}\, M_s^T, \qquad \Pi_{m_0^\perp} = I_3 - \frac{m_0 m_0^T}{\|m_0\|^2}.
\]
The projector $\Pi_{m_0^\perp}$ removes exactly the latent direction $m_0$ that controls the time coordinate: up to the mean, the support of the conditioned Gaussian is the image under $M_s$ of the two-dimensional latent subspace $m_0^\perp \subset \R^3$, and the motion of the center is governed by $M_s m_0$.

This formulation underlies the proof of Proposition~\ref{prop:bridge}.

\subsection{Rank under the stabilized denominator}\label{subsec:stab-rank}
Equations \eqref{eq:Vthree}--\eqref{eq:Sigmacond} use the exact denominator $\Sigmapt$. The implementation replaces it by the soft clamp $\Sigmaept = \sqrt{(\Sigmapt)^2 + \varepsilon^2}$ of Section~\ref{sec:numerical-stability}. This replaces exact Gaussian conditioning by a numerical continuation. The stabilized covariance is
\[
  \Sigmaecond = \Sigmathree - \frac{\cworld\cworld^T}{\Sigmaept}
  = \Sigmacond + \Delta_\varepsilon,
  \qquad
  \Delta_\varepsilon = \Bigl(\frac{1}{\Sigmapt} - \frac{1}{\Sigmaept}\Bigr)\cworld\cworld^T.
\]
The correction $\Delta_\varepsilon$ is PSD of rank at most 1 (the prefactor is non-negative since $\Sigmaept \geq \Sigmapt$), and for $n_0 \neq 0$ its direction $\cworld$ does not lie in $\im(\Sigmacond) = n_{1:}^\perp$, because $\langle n_{1:}, \cworld\rangle = -n_0\Sigmapt \neq 0$ by \eqref{eq:kernel-pair}. Hence for finite $\varepsilon$ and $n_0 \neq 0$, $\Sigmaecond$ is generically of full rank 3; it approaches the exact rank-2 covariance whenever $\Sigmapt \gg \varepsilon$. All exact statements above (rank 2, slice consistency) refer to the unclamped model.

% ====================================================================
\section{Rendering}\label{sec:rendering}
% =====================================================================

The conditional Gaussian captures the spatial shape of the splat at $t_0$ but not its temporal presence: a primitive centered at $v_0 = 0$ with $\sqrt{\Sigmapt} = 1$ frame should have faded out by $t_0 = 100$, and conditioning alone does not produce this fade. We therefore modulate opacity by a temporal weight.

\subsection{Effective opacity and the temporal weight}\label{subsec:effective-opacity}

\begin{definition}
Let $\Sigmabt = \Sigmapt + \sigma^2_{\mathrm{tmp}}$. The temporal weight at frame $t_0$ is
\begin{equation}\label{eq:wt}
  w_{t_0} = \exp\!\bigl(-(t_0 - v_0)^2/(2 \Sigmabt)\bigr) \in (0, 1],
\end{equation}
and the effective opacity is $\alpha^{\mathrm{eff}}(t_0) = \alpha \cdot w_{t_0} \in [0, \alpha]$.
\end{definition}

\begin{remark}\label{rmk:unnormalized}
The weight is deliberately unnormalized.
$w_t$ is the Gaussian \emph{kernel} (peak 1), not the Gaussian \emph{density} (which carries a factor $(2\pi\Sigmabt)^{-1/2}$). The product $\alpha \cdot w_t \cdot \mathcal{N}(\cdot\,; \Vthree, \Sigmacond)$ used in the rendering equation is therefore not the joint 4D density $p(y, t_0) = p(y \mid t_0) p(t_0)$. We drop the normalization to keep $\alpha^{\mathrm{eff}} \in [0, 1]$, as required by alpha compositing. This matches standard 3DGS practice; the missing normalization is absorbed by the learned $\alpha$.
\end{remark}

\begin{remark} Using $\Sigmabt$ in $w_t$ keeps the temporal weight independent of the conditioning, which uses $\Sigmapt$.
Adding $\sigma^2_{\mathrm{tmp}}$ to the denominator of \eqref{eq:Sigmacond} would smear the spatial slice and break the rank-2 structure of Lemma~\ref{lem:block-consequences}. The temporal weight does not need the rank structure, only a positive denominator, so the modeling floor enters only there. The numerical floor $\varepsilon$ (Section~\ref{sec:numerical-stability}) is shared by both denominators and is a separate device.
\end{remark}

\subsection{The full rendering equation}
For each Gaussian $k$ and frame $t_0$, preprocessing produces a triple $({\Vthree}_k(t_0), {\Sigmarender}_k, \alpha^{\mathrm{eff}}_k(t_0))$ with
\begin{equation}\label{eq:Sigma-render}
  {\Sigmarender}_k = {\Sigmacond}_k + \sigma^2_{\mathrm{lift},k} I_3.
\end{equation}
This triple, together with a color $c_k$, is fed to a standard 3DGS rasterizer, which applies the view transform $(R(t_0), c(t_0))$, the EWA perspective Jacobian, the pixel blur $\sigma^2_{\mathrm{pix}} I_2$, and front-to-back alpha compositing:
\begin{equation}\label{eq:rendering}
  C(y \mid t_0) = \sum_{k \in \text{sorted}} c_k \alpha^{\mathrm{eff}}_k(t_0) p_k(y \mid t_0) \prod_{j < k}\!\bigl(1 - \alpha^{\mathrm{eff}}_j(t_0) p_j(y \mid t_0)\bigr).
\end{equation}
Here $p_k(y \mid t_0)$ is the projected unnormalized Gaussian kernel (peak 1), consistent with Remark~\ref{rmk:unnormalized}.

% =====================================================================
\section{Numerical Stability}\label{sec:numerical-stability}
% =====================================================================

Rank-aware splatting has a numerical issue that the rank-4 formulations of \cite{yang2023gs4d,duan20244drotorgaussiansplattingefficient} do not encounter. The $\varepsilon$-clamp below changes the exact conditional Gaussian; Section~\ref{subsec:stab-rank} quantifies the deviation and its limit as $\Sigmapt/\varepsilon \to \infty$.

\subsection{Degenerate temporal extent and the rank bridge}
When $n \to \pm e_0$ (equivalently $n_{1:} \to 0$), the plane approaches the static slice $\{x_0 = 0\}$ and the Gaussian has vanishing temporal extent: $\Sigmapt \to 0$. The conditioning formulas \eqref{eq:Vthree}--\eqref{eq:Sigmacond} divide by a quantity approaching zero, and the temporal weight \eqref{eq:wt} evaluates as $0/0$ at $t_0 = v_0$, which contaminates all gradients. Our random initialization (Section~\ref{sec:initialization}) places the normals away from this orientation, but the optimizer can still drive individual primitives toward it; the clamp below keeps those primitives well-behaved.

\paragraph{A shared numerical floor.}
Replace every appearance of $\Sigmapt$ and $\Sigmabt$ in a denominator by
\begin{equation}\label{eq:clamp}
  \Sigmaept = \sqrt{(\Sigmapt)^2 + \varepsilon^2}, \qquad \Sigmaebt = \sqrt{(\Sigmabt)^2 + \varepsilon^2}.
\end{equation}
We use $\varepsilon = 10^{-8}$. For $\Sigma_{tt} \gg \varepsilon$ these are indistinguishable from the unclamped quantities; for $\Sigma_{tt} \to 0$ they remain positive and smooth. The floor $\varepsilon$ is a pure numerical guard, distinct from the modeling floor $\sigma^2_{\mathrm{tmp}}$.

The following result describes how the clamp bridges the static (rank-3) and dynamic (rank-2) regimes; Figure~\ref{fig:bridge} shows the eigenvalue crossover it produces.

\begin{proposition}[Effective-rank bridge near $n = e_0$]\label{prop:bridge}
Parameterize $n$ in a neighborhood of $e_0$ by $n_{1:} = \theta u$ with $\|u\| = 1$ and $\theta \to 0$, and let $S = L_{1:} L_{1:}^T$ where $L_{1:}$ denotes the lower three rows of $L$. For generic $L$: 
\begin{itemize}
\item $\Sigmapt = O(\theta^2)$;
\item $\cworld = O(\theta)$; and 
\item along each fixed tilt direction $u$, the exact correction $\cworld\cworld^T / \Sigmapt$ remains $O(1)$, with the direction-dependent limit
\[
  \cworld\cworld^T / \Sigmapt \;\longrightarrow\; \frac{S u u^T S}{u^T S u} \qquad (\theta \to 0).
\]
\end{itemize}
In particular, the exact conditioning map does not extend continuously to $n = e_0$ without specifying a tilt direction.
\end{proposition}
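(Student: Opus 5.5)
The plan is to compute everything in the latent form of the latent-space reading, writing $\Sigmafour = M M^T$ with $M = P_n L$. Then $\Sigmapt = \|m_0\|^2$ and $\cworld = M_s m_0$, where $m_0^T$ is the first row of $M$ and $M_s$ its lower three rows. Write $L_0^T$ for the first row of $L$ and $L_{1:}$ for the lower three rows. Set $n = (n_0, \theta u)$ with $n_0 = \sqrt{1-\theta^2}$; by $P_n = P_{-n}$ we may take $n_0 > 0$. Then $n^T L = n_0 L_0^T + \theta\, u^T L_{1:}$, and the two blocks of $M = L - n(n^T L)$ expand explicitly in $\theta$.

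\textbf{First row and $\Sigmapt$.} The first row is
\[
m_0^T = L_0^T - n_0(n_0 L_0^T + \theta u^T L_{1:}) = (1-n_0^2) L_0^T - n_0\theta\, u^T L_{1:} .
\]
Since $1-n_0^2 = \theta^2$ and $n_0 = 1 + O(\theta^2)$, this gives
\[
m_0 = -\theta\, L_{1:}^T u + O(\theta^2).
\]
Hence $\Sigmapt = \|m_0\|^2 = \theta^2\, u^T S u + O(\theta^3)$, where $S = L_{1:}L_{1:}^T$. This is the first bullet.

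\textbf{Lower rows and $\cworld$.} The lower rows are $M_s = L_{1:} - \theta u (n^T L) = L_{1:} + O(\theta)$. Therefore
\[
\cworld = M_s m_0 = -\theta\, S u + O(\theta^2),
\]
which is the second bullet. All $O(\cdot)$ constants are bounded uniformly in $\|u\|=1$, because they depend only on $\|L\|$.

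\textbf{The limit and genericity.} Dividing gives
\[
\frac{\cworld\cworld^T}{\Sigmapt} = \frac{\theta^2 S u u^T S + O(\theta^3)}{\theta^2 u^T S u + O(\theta^3)} \longrightarrow \frac{S u u^T S}{u^T S u}.
\]
The main point to justify is that the leading coefficient $u^T S u = \|L_{1:}^T u\|^2$ does not vanish. Genericity of $L$ is used exactly here: if $L_{1:}$ is invertible (an open dense condition), then $S$ is positive definite. It follows that $u^T S u \ge \lambda_{\min}(S) > 0$ for every unit $u$, so the denominator is bounded below by $c\,\theta^2$ uniformly in $u$. The quotient is then $O(1)$ along every tilt direction.

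\textbf{Failure of continuous extension.} The limit is the rank-one matrix $S u u^T S/(u^T S u)$, whose image is $\spn(Su)$. Because $S$ is invertible, two non-parallel tilt directions $u, u'$ give non-parallel vectors $Su, Su'$, hence different limits. Consequently $\cworld\cworld^T/\Sigmapt$, and with it $\Sigmacond = \Sigmathree - \cworld\cworld^T/\Sigmapt$, has no continuous extension to $n = e_0$ without a chosen direction $u$. Note that $\Sigmathree \to S$ continuously, so all of the direction dependence comes from the correction term.

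I expect the only real care to be needed in controlling the $O(\theta^3)$ remainders relative to the $\theta^2$ leading term, uniformly in $u$. The bound $u^T S u \ge \lambda_{\min}(S)$ above handles this.
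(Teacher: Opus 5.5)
Your proposal is correct and follows the paper's proof essentially step for step. You expand the first row of $P_n L$ to get $m_0 = -\theta L_{1:}^T u + O(\theta^2)$, hence $\Sigmapt = \theta^2 u^T S u + O(\theta^3)$; you use $(P_n L)_{1:} = L_{1:} + O(\theta)$ to get $\cworld = -\theta S u + O(\theta^2)$; and you divide. Your additions fill in points the paper leaves implicit: you make the genericity condition explicit ($L_{1:}$ invertible, so $u^T S u \ge \lambda_{\min}(S) > 0$ uniformly in $u$), and you check that non-parallel tilt directions give distinct rank-one limits with images $\spn(Su)$.
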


\begin{proof}
With $n_0 = \sqrt{1 - \|n_{1:}\|^2}$ and $n_{1:} = \theta u$, the first row of $P_n = I - nn^T$ is
$(P_n)_{0,:} = (1 - n_0^2,\; -n_0 n_{1:}^T) = (\theta^2,\; -n_0 \theta u^T)$.
The first row of $L_n = P_n L$ is therefore dominated by the spatial coupling, $(L_n)_{0,:} = -n_0\theta\, u^T L_{1:} + O(\theta^2)$, so with $m_0^T = (L_n)_{0,:}$,
\[
  \Sigmapt = \|m_0\|^2 = \theta^2\, u^T S u + O(\theta^3),
\]
which is (1). The lower rows satisfy $(L_n)_{1:} = L_{1:} + O(\theta)$, so $\cworld = (L_n)_{1:} m_0 = -\theta\, S u + O(\theta^2)$, which is (2). Substituting,
\[
  \frac{\cworld\cworld^T}{\Sigmapt} = \frac{\theta^2\, S u u^T S + O(\theta^3)}{\theta^2\, u^T S u + O(\theta^3)} \;\longrightarrow\; \frac{S u u^T S}{u^T S u},
\]
which is (3). The limit depends on $u$, so the exact conditioning map has no continuous extension to $n = e_0$.
\end{proof}

\begin{corollary}\label{cor:bridge}
With the stabilized denominator $\Sigmaept$, the correction scales as $O(\theta^2 / \varepsilon)$ when $\theta^2 u^T S u \ll \varepsilon$, so the stabilized covariance approaches the spatial block $\Sigmathree$ and the primitive behaves like a static full-rank 3D Gaussian. When $\Sigmapt \gg \varepsilon$, the stabilized formula approaches the exact rank-2 covariance. The clamp thus bridges static full-rank behavior and dynamic rank-2 disk behavior, with crossover at $\theta \sim \sqrt{\varepsilon / u^T S u}$.
\end{corollary}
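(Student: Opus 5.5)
The plan is to reuse the expansions from the proof of Proposition~\ref{prop:bridge} and feed them into the stabilized formula of Section~\ref{subsec:stab-rank}:
\[
  \Sigmaecond = \Sigmathree - \frac{\cworld\cworld^T}{\Sigmaept}.
\]
The correction term $\cworld\cworld^T/\Sigmaept$ will be treated in two regimes, separated by the size of $\Sigmapt$ relative to $\varepsilon$. Recall from the proposition, for generic $L$ (so $u^T S u > 0$; $S = L_{1:}L_{1:}^T$ is positive definite for generic $L$),
\[
  \Sigmapt = \theta^2\, u^T S u + O(\theta^3), \qquad \cworld = -\theta\, S u + O(\theta^2),
\]
so that $\cworld\cworld^T = \theta^2\, S u u^T S + O(\theta^3)$.

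\textbf{Static regime, $\theta^2 u^T S u \ll \varepsilon$.} Here $\Sigmaept = \sqrt{(\Sigmapt)^2 + \varepsilon^2} = \varepsilon\bigl(1 + O((\Sigmapt/\varepsilon)^2)\bigr)$, hence $\Sigmaept \geq \varepsilon$. The numerator is $O(\theta^2)$, so the correction is
\[
  \frac{\theta^2\, S u u^T S}{\varepsilon} + \text{higher order},
\]
which is $O(\theta^2/\varepsilon)$ and tends to $0$ in this regime. I will also check that $\Sigmathree = (L_n)_{1:}(L_n)_{1:}^T = S + O(\theta)$, since $(L_n)_{1:} = L_{1:} + O(\theta)$. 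Then $\Sigmaecond \to S$, which is full rank for generic $L$. This gives static, full-rank 3D Gaussian behavior. One more point needs care: the conditional mean velocity $\cworld/\Sigmaept$ is $O(\theta/\varepsilon)$ and also vanishes, so the primitive is static in its mean as well. This is worth mentioning, though it is not strictly required by the statement.

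\textbf{Dynamic regime, $\Sigmapt \gg \varepsilon$.} Write
\[
  \Delta_\varepsilon = \Bigl(\frac{1}{\Sigmapt} - \frac{1}{\Sigmaept}\Bigr)\cworld\cworld^T.
\]
Then $1/\Sigmapt - 1/\Sigmaept = O(\varepsilon^2/(\Sigmapt)^3)$, while by part (3) of Proposition~\ref{prop:bridge} (or simply the estimate $\|\cworld\|^2 = O(\Sigmapt)$ coming from the expansions) we have $\cworld\cworld^T = O(\Sigmapt)$. Hence $\Delta_\varepsilon = O(\varepsilon^2/(\Sigmapt)^2) \to 0$. So $\Sigmaecond \to \Sigmacond$, which has rank 2 by Lemma~\ref{lem:block-consequences}.

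\textbf{Crossover.} The transition occurs where $\Sigmapt \sim \varepsilon$, that is, $\theta^2 u^T S u \sim \varepsilon$, which gives $\theta \sim \sqrt{\varepsilon/u^T S u}$.

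\textbf{Main obstacle.} The error terms must be uniform enough to make these heuristic regimes precise. The $O(\theta^3)$ remainders have to be controlled relative to the leading terms, and that control uses $u^T S u$ bounded away from $0$, which holds by genericity via positive definiteness of $S$. I would state the two regimes as limits, $\theta^2/\varepsilon \to 0$ and $\Sigmapt/\varepsilon \to \infty$ respectively, rather than as quantitative inequalities.
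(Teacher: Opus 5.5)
Your argument is correct and follows the derivation the paper intends; the paper gives no separate proof, and the corollary follows from Proposition~\ref{prop:bridge} together with Section~\ref{subsec:stab-rank}. As you do, substitute the expansions $\Sigmapt = \theta^2 u^T S u + O(\theta^3)$ and $\cworld = -\theta S u + O(\theta^2)$ into $\Sigmaecond$. Then use $\Sigmaept \approx \varepsilon$ when $\Sigmapt \ll \varepsilon$, and $\Delta_\varepsilon = O(\varepsilon^2/(\Sigmapt)^2)$ when $\Sigmapt \gg \varepsilon$.

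One correction to your optional aside about the mean: the stabilized velocity $\cworld/\Sigmaept \approx -\theta S u/\varepsilon$ does \emph{not} vanish throughout the regime $\theta^2 u^T S u \ll \varepsilon$. It is small only when $\theta \ll \varepsilon$. For $\varepsilon = 10^{-8}$ and $\theta = 10^{-6}$ it has size about $100\|Su\|$, and near the crossover it is of order $\varepsilon^{-1/2}$. Either drop that claim or replace it with the following observation. With $\sigma^2_{\mathrm{tmp}} = 0$, the temporal weight confines the primitive to a window of width $\sqrt{\Sigmaebt} \approx \sqrt{\varepsilon}$. Over that window the center moves only $O(\theta\|Su\|/\sqrt{\varepsilon})$, which is small relative to the spatial extent exactly when $\theta^2 u^T S u \ll \varepsilon$.
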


\begin{figure}[t]
\centering
\paperfig{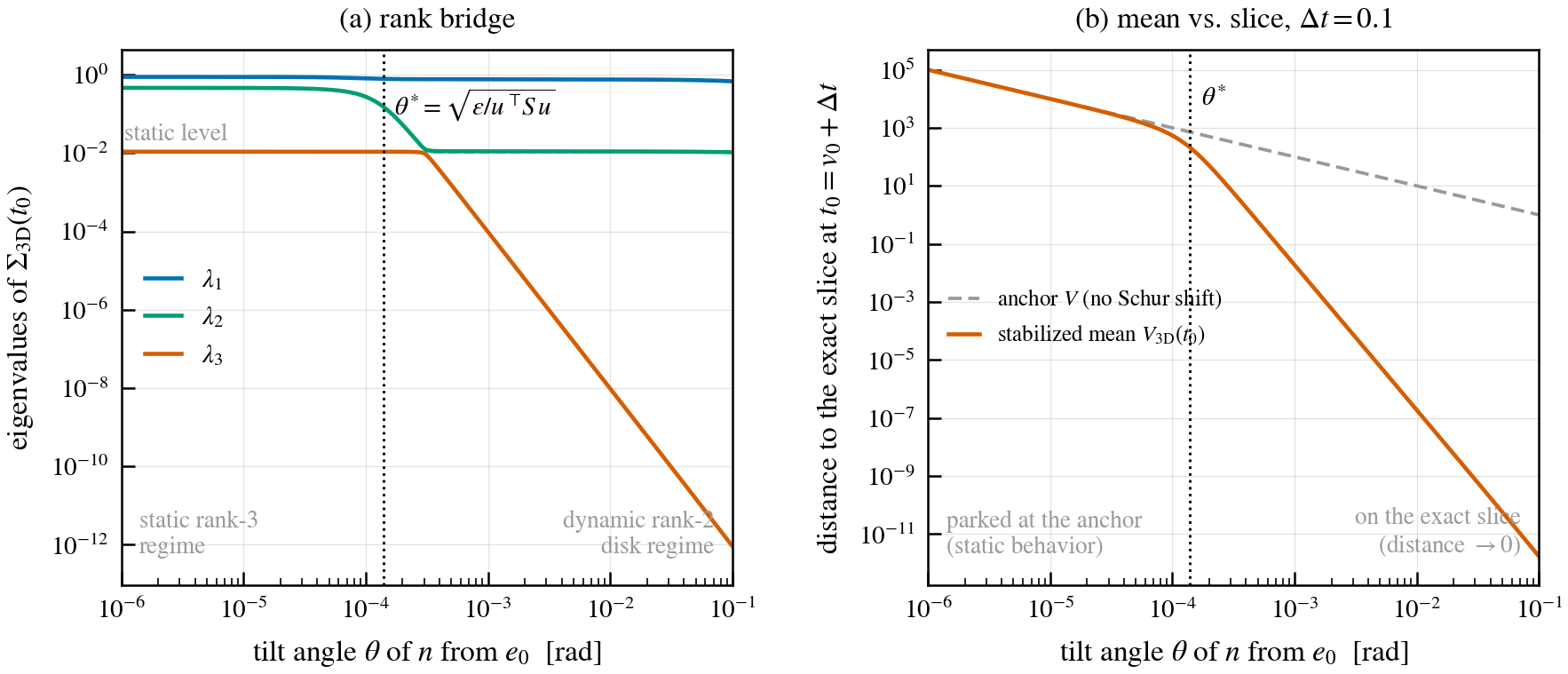}{0.78\textwidth}
\caption{Eigenvalues of the conditioned $\Sigmacond$ as the plane normal tilts away from the static orientation by angle $\theta = \angle(n, e_0)$. Below the crossover $\theta^\star = \sqrt{\varepsilon / u^T S u}$ (Corollary~\ref{cor:bridge}) all three eigenvalues are comparable and the primitive renders like a static rank-3 3DGS Gaussian; above it the smallest eigenvalue collapses and the primitive becomes a rank-2 disk.}
\label{fig:bridge}
\end{figure}

\begin{remark}[Why not use $\Sigmapt + \sigma^2_{\mathrm{tmp}}$ everywhere?]
A simpler design uses $\Sigmapt + \sigma^2_{\mathrm{tmp}}$ in both the conditioning and $w_t$, with $\sigma^2_{\mathrm{tmp}} > 0$ as a combined modeling-and-numerical floor. This couples the roles: $\sigma^2_{\mathrm{tmp}}$ must then be large enough for numerical stability, forcing every Gaussian to have a minimum visible duration. The two-floor design ($\varepsilon$ for stability, $\sigma^2_{\mathrm{tmp}}$ for modeling) keeps the roles separate.
\end{remark}

% =====================================================================
\section{Gradients and Optimization}\label{sec:gradients}
% =====================================================================

The training loop optimizes $(n_{\mathrm{raw}}, L, \mu, \alpha, \text{color})$ together with a per-Gaussian lift logit $\beta_k$ (Section~\ref{subsec:blur-trio}) by gradient descent on a video reconstruction loss.

\subsection{Autograd chain}
The map from raw parameters to rasterizer input is a composition of standard PyTorch operations:
\begin{enumerate}
  \item normalize: $n = n_{\mathrm{raw}}/\|n_{\mathrm{raw}}\|$;
  \item project: $L_n = L - n(n^T L)$;
  \item form $\Sigmafour = L_n L_n^T$;
  \item block-decompose into $\Sigmapt, \cworld, \Sigmathree$;
  \item conditioning step \eqref{eq:Vthree}--\eqref{eq:Sigmacond} with clamped denominator \eqref{eq:clamp};
  \item temporal weight \eqref{eq:wt} and effective opacity;
  \item rank lift: $\Sigmarender = \Sigmacond + \sigma^2_{\mathrm{lift},k} I_3$, with the per-Gaussian $\sigma_{\mathrm{lift},k} = \mathrm{softplus}(\beta_k)$;
  \item pass to the rasterizer.
\end{enumerate}
Steps 1--7 are pure PyTorch; step 8 is a standard 3DGS rasterizer with a documented backward. Autograd composes everything into gradients on the raw parameters. The lift is a model parameter rather than a training-only device: the learned $\sigma_{\mathrm{lift},k}$ shapes the rendered covariance at test time as well.

\subsection{Gauge directions and unconstrained parameters}
The redundant directions in $L$ (Section~\ref{subsec:effective-dofs}) require no handling: the gradient of the photometric loss vanishes along the projected-out directions, the Frobenius regularizer only decays them, and the $\OO(3)$ action leaves the full loss invariant. $\mu$, $L$, and the color are unconstrained; $\alpha$ is stored as a logit, and the per-Gaussian lift as a logit $\beta_k$ mapped through a softplus, which keeps $\sigma_{\mathrm{lift},k}$ nonnegative. None require manifold projection.

% =====================================================================
\section{Initialization and Training}\label{sec:initialization}
% =====================================================================

\subsection{Input data and preprocessing}
Input: a video sequence with known camera poses, typically from structure-from-motion.

\begin{remark}[Units]
The factor $L$ couples time and space additively, so the two axes require
a stated unit convention. Timestamps are normalized to $[0,1]$ and spatial
coordinates are kept in the capture's metric units. Every variance
hyperparameter below is calibrated in these units; rescaling one axis
requires recalibrating all of them.
\end{remark}

\subsection{Initialization}\label{subsec:initialization}
Seed points are the structure-from-motion point cloud, augmented with five times as many filler points drawn uniformly in a fixed box around the scene. One Gaussian per seed point; one plane orientation per Gaussian.

\paragraph{Plane normal.} We initialize $n$ at random: $n_{\mathrm{raw}} \sim \mathcal{N}(0, I_4)$, renormalized onto the sphere. This places every primitive away from the singular static orientation $n = \pm e_0$ from the first iteration, so temporal-tilt gradients are well-defined without a special safeguard; the soft clamp of Section~\ref{sec:numerical-stability} handles the small fraction of primitives the optimizer later drives toward $e_0$.

\paragraph{Mean.} For seed point $X_k$ we set the spatial mean $V_k = X_k$ and draw the temporal mean $v_{0,k}$ uniformly, with replacement, from the discrete training timestamps, independently of the seed point's own frame.

\paragraph{Factor.} $L_k = \rho\,\widetilde L_k$ with $\widetilde L_k$ i.i.d.\ standard normal and $\rho = \sqrt{\sigma^2_{\mathrm{init}}/3}$, $\sigma^2_{\mathrm{init}} = 0.02$, so the diagonal entries of the initial $4$D covariance have expectation $\sigma^2_{\mathrm{init}}$.

\paragraph{Opacity.} $\alpha_k$ initialized to $0.5$ (logit zero); the comparatively high value strengthens the early screen-space gradients that drive density control.

\paragraph{Per-Gaussian lift.} The lift logit is initialized to $\beta_k = \mathrm{softplus}^{-1}(10^{-2})$, i.e.\ $\sigma_{\mathrm{lift},k} = 10^{-2}$ for every primitive (Section~\ref{subsec:blur-trio}), so training departs from a model identical to one with a fixed global lift.

\subsection{Loss}\label{subsec:loss}
The training objective is
\[
  \mathcal L = 0.8\,\mathcal L_1^{\mathrm{rw}} + 0.2\,\big(1 - \mathrm{MS\text{-}SSIM}\big) + 10^{-4}\,\|L\|_F^2 + \lambda_{\mathrm{hinge}}\,\mathrm{relu}(\Sigmapt - \tau) + \lambda_{\mathrm{per}}\,\mathrm{LPIPS},
\]
with $\lambda_{\mathrm{hinge}} = 0.05$, $\tau = 0.01$, $\lambda_{\mathrm{per}} = 0.02$; the two regularizers are averaged over primitives. $\mathcal L_1^{\mathrm{rw}}$ is a residual-weighted $\ell_1$ photometric term: each pixel's $\ell_1$ error is weighted by its own (detached) channel-mean residual, normalized to unit mean over the image, so above-average-error pixels are emphasized proportionally while the overall scale of the term is preserved. The structural term is multi-scale SSIM \cite{Wang03b}. The Frobenius term lightly regularizes the raw factor, including the gauge ($n$-direction) component of $L$ that the covariance ignores (Section~\ref{subsec:effective-dofs}), and is benign. The hinge acts on the pure temporal variance (before the temporal floor and before conditioning) and discourages primitives from spanning the whole sequence. The perceptual term is LPIPS \cite{Zhang_2018_CVPR} on AlexNet features.

\subsection{Density control}
Every $K = 200$ iterations, from iteration $500$ to the end of training, we prune and split, adapted to spacetime; we do \emph{not} clone.
\begin{itemize}
  \item \textbf{Prune} if $\alpha_k < 0.005$, if the smallest eigenvalue of the pre-conditioning spatial block collapses below $10^{-6}$, or if its largest eigenvalue exceeds a fixed over-extent bound ($100$ scene units squared).
  \item \textbf{Split} Gaussians whose accumulated screen-space positional gradient exceeds $5 \cdot 10^{-5}$, decomposed into a spatial and a temporal split. The spatial split fires when the largest spatial eigenvalue exceeds $0.5$; the children are offset by one standard deviation along the major axis, and only the major-axis component of the factor is shrunk (by $1.6$), preserving the orthogonal in-plane direction and the temporal row. The temporal split fires when the floored temporal variance exceeds $0.02$ (on $[0,1]$ time); the children are offset by one standard deviation in $\mu_t$, and only the temporal row is shrunk, which leaves the conditioned spatial covariance unchanged.
  \item \textbf{Cap.} The population is capped at $8 \cdot 10^5$ primitives: at the cap, growth is suspended while pruning continues, and growth resumes if the population falls back below the cap.
\end{itemize}

\paragraph{Temporal sensitivity pruning.} On top of the local density control we apply a temporal sensitivity pruning schedule inspired by the TSP of SpeeDe3DGS \cite{TuYing2025SpeeDe3DGS}: at scheduled iterations, a fixed fraction of the population with the lowest accumulated sensitivity is removed. Our sensitivity score is simpler than theirs: SpeeDe3DGS accumulates squared gradients of the rendered image with respect to the Gaussian parameters, whereas we accumulate the squared screen-space positional gradient of the loss, $\widetilde U_k = \sum \|\nabla_{g_k} \mathcal L\|^2$, the same signal that triggers densification, summed over every step since the last pruning event. The events are at iterations $2000$, $5000$, $8000$, and $11000$ (a first event at $2000$, then every $3000$ iterations, and never within the final $1000$ iterations, so that the survivors re-optimize before evaluation), and the fraction is $30\%$. Two adaptations reflect that our primitives are time-localized rather than persistent: a primitive is eligible for pruning only once it has contributed to at least $5$ rendered frames, which protects newly split children and rarely sampled time slices; and the ranking uses the accumulated sum rather than a per-observation mean, favoring primitives that contribute across more of the timeline. Ordinary densification continues between pruning events, so the population regrows between prunes; the pruning reduces the final primitive count by up to ${\sim}2.5\times$ at unchanged quality.

\subsection{Optimizer and schedule}
Adam with parameter-group learning rates: $n_{\mathrm{raw}}$ $10^{-3}$, $L$ $5\cdot 10^{-3}$, $\alpha$ logit $5\cdot 10^{-2}$, SH color $2.5\cdot 10^{-3}$ (higher-order coefficients at $1/20$ of that), the lift logit $5\cdot 10^{-3}$, and a single group for $\mu$ on the standard 3DGS exponential position schedule ($100\times$ decay over the run). The learning rate of $n$ is held at zero for the first $3000$ iterations and then raised linearly on a window that extends past the end of training, reaching roughly half its base value by termination; the $n$ and $L$ rates are additionally decayed exponentially to $1\%$ of their base values over the run. This slow release of the plane orientation stabilizes primitives that approach the static orientation. The opacity and color rates are constant, and the active SH degree is ramped from $0$ to its maximum of $3$ over the first $3000$ iterations. Total $14{,}000$ iterations.

%=====================================================================
\section{Experiments}\label{sec:experiments}

\medskip

\subsection{Protocol}
We evaluate on the 17 HyperNeRF scenes of the Monocular Dynamic Gaussian Splatting benchmark (MonoDyGauBench) \cite{liang2025monocular}, following its protocol: the benchmark's train/test splits, images at half resolution (quarter resolution for the \texttt{tamping} scene), and LPIPS on AlexNet features. Baselines are the methods benchmarked there: the native 4D method RTGS \cite{yang2023gs4d}, the deformation-based 4DGS \cite{Wu_2024_CVPR} and DeformableGS \cite{Yang_2024_CVPR}, the spacetime-Gaussian methods STG and STG-decoder \cite{Li_2024_CVPR}, the curve-based EffGS \cite{katsumata2024compact}, and static 3DGS \cite{kerbl3Dgaussians}. Baseline numbers are taken from the benchmark, not retrained: means over the same 17 scenes and three seeds at the benchmark's full $30{,}000$-iteration budget, timed on an RTX~3090. Our numbers are single runs at $14{,}000$ iterations, timed on an NVIDIA A10G. We stop at $14{,}000$ iterations because the model has converged: per-checkpoint evaluation on three pilot scenes shows test PSNR plateauing from ${\sim}12$k iterations, with at most $+0.26$\,dB from extending to $20{,}000$ and $+0.16$\,dB from the full $30{,}000$ budget at $2.1\times$ the training cost (Fig.~\ref{fig:train-budget}, appendix). We report PSNR, SSIM, MS-SSIM, LPIPS, and training time, averaged over the scenes.

\subsection{Main results}

\begin{table}[t]
\centering
\caption{Mean results over the 17 HyperNeRF scenes of MonoDyGauBench; best per column in \textbf{bold}, our ranks among the compared methods in parentheses.}
\renewcommand{\arraystretch}{1.2}
\begin{tabular}{@{}lccccc@{}}
\toprule
Method & PSNR $\uparrow$ & SSIM $\uparrow$ & MS-SSIM $\uparrow$ & LPIPS $\downarrow$ & Train (s) $\downarrow$ \\
\midrule
4DGS (HexPlane)                  & \textbf{25.70} & \textbf{0.79} & \textbf{0.89} & \textbf{0.23} & 10171 \\
Deformable GS (MLP)  & 24.58 & 0.74  & 0.83  & 0.27  & 8855 \\
STG-decoder (TRBF)      & 23.92 & 0.73  & 0.83  & 0.34  & 7423 \\
RTGS (FourDim)                 & 22.99 & 0.71  & 0.79  & 0.35  & 11508 \\
STG (TRBF)              & 22.92 & 0.70  & 0.78  & 0.41  & 5730 \\
EffGS (Curve)     & 22.24 & 0.70  & 0.79  & 0.37  & 4120 \\
3DGS-static                  & 20.98 & 0.69  & 0.76  & 0.37  & 2587 \\
\midrule
Grassmann (ours)    & 25.05\,(2) & 0.730\,(3) & 0.863\,(2) & 0.244\,(2) & $\mathbf{1814}$\,(1) \\
\bottomrule
\end{tabular}
\label{tab:main}
\end{table}

Our method trains fastest among all compared methods, $1.4\times$ faster than static 3DGS and $4.9$ to $5.6\times$ faster than the strongest quality baselines, and renders at $11.3$\,ms per frame ($\approx 88$\,FPS) with a mean of $196$k primitives. Quality ranks second in PSNR, MS-SSIM, and LPIPS: at $25.05$\,dB, PSNR is $0.47$\,dB ahead of DeformableGS and $0.65$\,dB behind 4DGS, whose training takes $5.6\times$ longer; LPIPS is $0.014$ behind the leader. SSIM ties STG-decoder for third at the reported precision. The remaining gap to the leader is concentrated in a single class of motion, coordinated rotation of extended objects, which no straight-line primitive trajectory can represent; a quantitative per-frame analysis is deferred to future work.

% =====================================================================
\section{Discussion and Limitations}\label{sec:discussion}
% =====================================================================

We have introduced a dynamic scene representation whose primitive is a
Gaussian supported on a 3-plane in spacetime, parameterized globally
over $\Gr(3,4)$ by a unit normal and an unconstrained factor: time
conditioning returns a rank-2 surfel at every frame, the
constant-velocity motion of the disk is a property of the plane rather
than a learned deformation, and the pipeline runs on an unmodified 3DGS
rasterizer. The main limitation is built into the primitive: each plane
encodes a single constant-velocity translation, so coordinated rotation
of extended objects must be assembled from many short-lived primitives,
and this class of motion accounts for most of the remaining quality gap
to the strongest deformation baseline. Sharing motion across
primitives, rather than replacing them, is the natural extension.

 %=====================================================================
\appendix
\section{Schur Rank Lemma}\label{app:schur}
% =====================================================================

\begin{lemma}[Schur rank drop]\label{lem:schur-rank}
Let $S \in \Symp_{1+d}$ have rank $r$, blocked as
\[
  S = \begin{pmatrix} a & b^T \\ b & C \end{pmatrix}, \quad a > 0.
\]
Then $\rank(C - bb^T/a) = r - 1$.
\end{lemma}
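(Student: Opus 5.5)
The plan is to use the Schur congruence, which splits $S$ into a block-diagonal matrix with one block equal to $a$ and the other equal to the Schur complement. Set
\[
  T = \begin{pmatrix} 1 & 0 \\ -b/a & I_d \end{pmatrix} \in \R^{(1+d)\times(1+d)},
\]
which is lower unitriangular and hence invertible. A direct block multiplication should give
\[
  T S T^T = \begin{pmatrix} a & 0 \\ 0 & C - bb^T/a \end{pmatrix}.
\]
The only check is that the off-diagonal block $b - (b/a)a$ vanishes, and that the lower-right block is $C - bb^T/a - bb^T/a + bb^T/a = C - bb^T/a$.

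Congruence by an invertible matrix preserves rank, so $\rank(S) = r$ equals the rank of the block-diagonal matrix. The rank of a block-diagonal matrix is the sum of the ranks of its blocks. Since $a > 0$, the scalar block contributes exactly 1, so
\[
  r = 1 + \rank(C - bb^T/a),
\]
which is the claim. As a by-product, the congruence also shows that $C - bb^T/a$ is PSD: it is a principal block of the PSD matrix $TST^T$. Positive semidefiniteness is not needed for the rank statement itself, only $a \neq 0$.

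There is no real obstacle here. The one point to state explicitly is that the hypothesis $a > 0$ is used twice: to make $T$ well defined, and to ensure the $1\times 1$ block has rank exactly 1. Both uses matter for how the lemma is applied in Lemma~\ref{lem:block-consequences}, where $a = \Sigmapt > 0$ follows from $n \neq \pm e_0$ by Remark~\ref{rmk:tt-vanish}.

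An alternative route works through the factor $S = MM^T$ with first row $m_0^T$, so that $a = \|m_0\|^2$. Then the Schur complement equals $M_s \Pi_{m_0^\perp} M_s^T$, as in the latent-space reading. The rank count in that version is messier, so I would use the congruence argument.
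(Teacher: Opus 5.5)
Your proposal is correct and follows the paper's own proof: the same unitriangular congruence $\begin{pmatrix} 1 & 0 \\ -b/a & I_d \end{pmatrix}$ block-diagonalizes $S$ into $a \oplus (C - bb^T/a)$, and invariance of rank under congruence gives $r = 1 + \rank(C - bb^T/a)$. Your side remarks are also accurate: only $a \neq 0$ (together with the symmetric blocking) is needed for the rank count, and the congruence yields positive semidefiniteness of the Schur complement for free.
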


\begin{proof}
Let $Q = \begin{pmatrix} 1 & 0 \\ -b/a & I_d \end{pmatrix}$. Then $QSQ^T = \begin{pmatrix} a & 0 \\ 0 & C - bb^T/a\end{pmatrix}$. Congruence preserves rank, so $r = 1 + \rank(C - bb^T/a)$.
\end{proof}

% =====================================================================
\section{Training-Budget Saturation}\label{app:budget}
% =====================================================================

\begin{figure}[t]
\centering
\paperfig{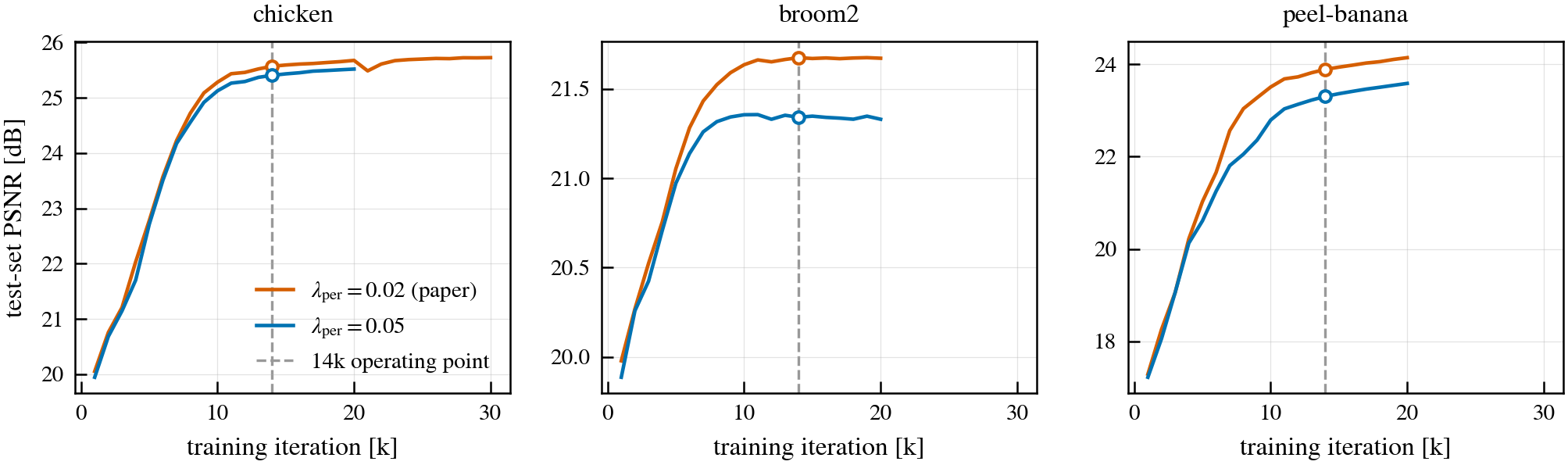}{\textwidth}
\caption{Full-test-set PSNR per $1{,}000$-iteration checkpoint on three pilot scenes (single seed). The dashed line marks the $14{,}000$-iteration operating point used throughout the paper; the step-$14{,}000$ snapshot of these runs is exactly the recipe of Section~\ref{sec:initialization}, with pruning events pinned to the same iterations. On \texttt{chicken} the $\lambda_{\mathrm{per}}=0.02$ run is resumed to the full $30{,}000$-iteration benchmark budget; the small transient at the resume point recovers within $2$k iterations.}
\label{fig:train-budget}
\end{figure}

Test PSNR plateaus from roughly iteration $12{,}000$ on all three scenes and for both perceptual-weight settings. For the paper recipe ($\lambda_{\mathrm{per}} = 0.02$), training to $20{,}000$ iterations instead of $14{,}000$ changes PSNR by $+0.11$\,dB on \texttt{chicken}, $-0.00$\,dB on \texttt{broom2}, and $+0.26$\,dB on \texttt{peel-banana}; continuing \texttt{chicken} to the benchmark's full $30{,}000$-iteration budget yields $+0.16$\,dB over the operating point at $2.1\times$ the training cost. The $\lambda_{\mathrm{per}} = 0.05$ variant behaves the same way (at most $+0.28$\,dB at $20{,}000$). LPIPS improves by at most $0.02$ over the same extensions. Table~\ref{tab:main} therefore compares against $30{,}000$-iteration baselines with a converged model, not a truncated one: the training-time advantage is a property of the operating point, not of early stopping.

\bibliographystyle{unsrt}
\bibliography{references}

\end{document}

%% file: fig_anatomy.tex
\begin{tikzpicture}[scale=1.5, >=stealth,
    font=\scriptsize,
    slice/.style={draw=oiVerm!85!black, fill=oiVerm, line width=0.5pt},
    glass/.style={fill=oiSky!30, fill opacity=0.28, draw=oiSky!60!black,
                  line width=0.4pt, draw opacity=0.45},
    leader/.style={densely dotted, oiGrey, line width=0.35pt}]
  % Okabe-Ito, matching figures/paperstyle.py
  \definecolor{oiBlue}{RGB}{0,114,178}
  \definecolor{oiSky}{RGB}{86,180,233}
  \definecolor{oiVerm}{RGB}{213,94,0}
  \definecolor{oiGreen}{RGB}{0,158,115}
  \definecolor{oiOrange}{RGB}{230,159,0}
  \definecolor{oiGrey}{RGB}{153,153,153}

  % ---- support plane mu + E_n -------------------------------------------
  \fill[oiBlue!10, draw=oiBlue!60, line width=0.5pt]
    (-1.05,0.255) -- (3.15,2.145) -- (4.65,1.365) -- (0.45,-0.525) -- cycle;
  \node[oiBlue!70!black, anchor=south east, inner sep=1pt] at (-0.35,0.62)
    {$\mu + E_n$};

  % ---- frame slices {t = t_i}: SAME window, offset = Δt · P e_t ----------
  \fill[glass] (-0.275,-0.219) -- (-0.275,2.541) -- (1.075,1.839) -- (1.075,-0.921) -- cycle;
  \fill[glass] (1.125,-0.219) -- (1.125,2.541) -- (2.475,1.839) -- (2.475,-0.921) -- cycle;
  \fill[glass] (2.525,-0.219) -- (2.525,2.541) -- (3.875,1.839) -- (3.875,-0.921) -- cycle;
  \node[oiSky!60!black, anchor=south, inner sep=2pt] at (-0.275,2.56) {slice $t_0$};
  \node[oiSky!60!black, anchor=south, inner sep=2pt] at (1.125,2.56) {$t_1$};
  \node[oiSky!60!black, anchor=south, inner sep=2pt] at (2.525,2.56) {$t_2$};

  % ---- plane ∩ slice lines (the segments below lie exactly on these) -----
  \draw[oiGrey!80, line width=0.4pt] (-0.275,0.531) -- (1.075,-0.171);
  \draw[oiGrey!80, line width=0.4pt] (1.125,1.161) -- (2.475,0.459);
  \draw[oiGrey!80, line width=0.4pt] (2.525,1.791) -- (3.875,1.089);

  % ---- leaders: segment -> its opacity on the envelope (drawn early) -----
  \draw[leader] (0.169,0.300) -- (0.40,-1.928);
  \draw[leader] (1.8,0.81)    -- (1.80,-1.35);
  \draw[leader] (3.431,1.320) -- (3.20,-1.928);

  % ---- center worldline (t, V3D(t)) — lies IN the support plane ----------
  \draw[dashed, oiGrey!90!black, line width=0.5pt, ->]
    (-0.4135,0.118) -- (4.072,1.520)
    node[anchor=north west, inner sep=2pt] {$\bigl(t,\Vthree(t)\bigr)$};

  % ---- spacetime normal n = n0 e_t + n_{1:}, all three drawn via P -------
  % dashed spatial part n_{1:} (= P e_x direction), dotted n0-connector
  % (= n0 · P e_t: horizontal, parallel to the slice offset by construction)
  \draw[dashed, oiGrey!70!black, line width=0.5pt, ->]
    (0.80,0.6995) -- (0.80,1.566);
  \node[oiGrey!70!black, anchor=west, inner sep=2pt] at (0.815,1.36) {$n_{1:}$};
  \draw[leader] (0.80,1.566) -- (0.410,1.566);
  \draw[->, line width=0.7pt] (0.80,0.6995) -- (0.410,1.566)
    node[anchor=south, inner sep=1.5pt] {$n=(n_0,\,n_{1:})$};

  % ---- ghost: t1-segment carried to t2 with NO motion ---------------------
  \draw[leader] (1.8,0.81) -- (3.2,0.81);
  \draw[dashed, oiGrey, line width=0.4pt]
    (3.2,0.81) ellipse [x radius=0.479, y radius=0.075, rotate=-27.5];
  \node[oiGrey, anchor=west, inner sep=2pt, font=\tiny] at (3.68,0.48)
    {ghost: no motion};

  % ---- displacement decomposition, a parallelogram INSIDE the t2 slice ---
  \draw[dashed, oiGrey, line width=0.4pt] (3.2,0.81) -- (3.431,0.690)
    -- (3.431,1.320);
  % normal motion, from n: along the in-slice normal n_{1:} (vertical)
  \draw[->, oiGreen, line width=0.9pt] (3.2,0.81) -- (3.2,1.44);
  \draw[oiGreen!60, line width=0.3pt] (3.99,0.85) -- (3.25,1.10);
  \node[align=left, anchor=west, inner sep=1.5pt, text=oiGreen!80!black]
    at (4.02,0.80)
    {$-\tfrac{n_0}{\|n_{1:}\|^{2}}\,n_{1:}\,\Delta t$\\[-1pt]{\color{oiGrey}(from $n$)}};
  % tangential drift, from L (cross-block c_world): along the slice line
  \draw[->, oiOrange, line width=0.9pt] (3.2,1.44) -- (3.431,1.320);
  \node[align=left, anchor=south west, inner sep=1.5pt, text=oiOrange!75!black]
    at (3.12,1.63) {$u_{\parallel}\,\Delta t$\\[-1pt]{\color{oiGrey}(from $L$)}};

  % ---- sliced Gaussian at t0, t1, t2: segments on the ∩-lines -------------
  % (thin ellipses; a rank-2 disk in the full model, a segment here because
  % one spatial dimension is suppressed)
  \begin{scope}[fill opacity=0.40]
    \draw[slice] (0.169,0.300) ellipse [x radius=0.479, y radius=0.075, rotate=-27.5];
  \end{scope}
  \begin{scope}[fill opacity=0.85]
    \draw[slice] (1.8,0.81) ellipse [x radius=0.479, y radius=0.075, rotate=-27.5];
  \end{scope}
  \begin{scope}[fill opacity=0.40]
    \draw[slice] (3.431,1.320) ellipse [x radius=0.479, y radius=0.075, rotate=-27.5];
  \end{scope}

  % ---- anchor mu ----------------------------------------------------------
  \fill (1.8,0.81) circle (1.2pt);
  \node[anchor=north west, inner sep=1pt] at (1.86,0.40) {$\mu=(v_0,V)$};

  % ---- axes triad (t horizontal: same P e_t as the slice offsets) ---------
  \begin{scope}[shift={(-1.45,-1.9)}]
    \draw[->, line width=0.5pt] (0,0) -- (0.55,0) node[anchor=west, inner sep=1pt] {$t$};
    \draw[->, line width=0.5pt] (0,0) -- (0,0.50) node[anchor=south, inner sep=1pt] {$x$};
    \draw[->, line width=0.5pt] (0,0) -- (0.40,-0.21) node[anchor=west, inner sep=0.5pt] {$y$};
  \end{scope}

  % ---- temporal envelope w_t ----------------------------------------------
  % w(t) = exp(-(t - 1.8)^2 / 2.456)  [sigma = 1.108], height 1.05
  \fill[oiBlue!12]
    (-0.35,-2.4) -- plot[domain=-0.35:4.15, samples=81]
      ({\x}, {-2.4 + 1.05*exp(-(\x-1.8)^2/2.456)}) -- (4.15,-2.4) -- cycle;
  \draw[oiBlue, line width=0.7pt]
    plot[domain=-0.35:4.15, samples=81]
      ({\x}, {-2.4 + 1.05*exp(-(\x-1.8)^2/2.456)});
  \draw[->, line width=0.5pt] (-0.35,-2.4) -- (4.6,-2.4)
    node[anchor=west, inner sep=1pt] {$t$};
  \draw[->, line width=0.5pt] (-0.35,-2.4) -- (-0.35,-1.2)
    node[anchor=south, inner sep=1pt] {$\alpha\,w_t$};
  % peak level alpha
  \draw[dashed, oiGrey, line width=0.35pt] (-0.35,-1.35) -- (1.8,-1.35);
  \node[anchor=east, inner sep=1.5pt] at (-0.35,-1.35) {$\alpha$};
  % v0 tick
  \draw[line width=0.5pt] (1.8,-2.4) -- (1.8,-2.47)
    node[anchor=north, inner sep=1.5pt] {$v_0$};
  % temporal extent: sqrt(Sigma_tt^blur) at the 1-sigma level exp(-1/2)=0.607
  \draw[<->, line width=0.5pt] (1.8,-1.763) -- (2.908,-1.763)
    node[midway, anchor=south, inner sep=1.5pt] {$\sqrt{\Sigmabt}$};
  % birth / death annotations
  \node[oiGrey, anchor=south, font=\tiny] at (0.0,-2.37) {birth};
  \node[oiGrey, anchor=south, font=\tiny] at (3.7,-2.37) {death};
\end{tikzpicture}